\newcommand{\his}{\ensuremath{{\mathcal H}}\xspace}
\theoremstyle{plain}
\newtheorem{thm}{\protect\theoremname}
\theoremstyle{remark}
\theoremstyle{plain}
\theoremstyle{definition}
\providecommand{\claimname}{Claim}
\providecommand{\definitionname}{Definition}
\providecommand{\lemmaname}{Lemma}
\providecommand{\theoremname}{Theorem}
\newcommand{\bydef}{\triangleq}
\newcommand{\probd}{\mathbb{P}}
\newcommand{\prob}{P} 
\newcommand{\rew}{\rho}
\newcommand{\rewsimpl}{\Breve{\rew}}
\newcommand{\ret}{g_{k}}
\newcommand{\sep}{ | }
\newcommand{\retsimpl}{\Breve{g}_{k}}
\newcommand{\bsimpl}{\Breve{b}}
\newcommand{\lb}{l}
\newcommand{\ub}{u}
\newcommand{\mot}[3]{\probd_T({#3} \sep {#2}, {#1})}
\newcommand{\observ}[2]{\probd_Z({#2} \sep {#1})}
\newcommand{\ploss}{\texttt{PLoss}\xspace}
\newcommand{\pbloss}{\texttt{PbLoss}\xspace}
\newcommand{\extendedpomdp}{$\probd\rho$-POMDP\xspace}
\begin{document}

\title{Probabilistic Loss and its Online Characterization for Simplified Decision Making Under Uncertainty}


\author{Andrey Zhitnikov\IEEEauthorrefmark{1} and Vadim Indelman\IEEEauthorrefmark{2} \\
	\IEEEauthorrefmark{1}Technion Autonomous Systems Program \quad \IEEEauthorrefmark{2}Department of Aerospace Engineering\\
	Technion - Israel Institute of Technology, Haifa 32000, Israel\\
	\small{\texttt{andreyz@campus.technion.ac.il, vadim.indelman@technion.ac.il}}
}



%

\maketitle

\begin{abstract}
It is a long-standing objective to ease the computation burden incurred by the decision making process. Identification of this mechanism’s sensitivity to simplification has tremendous ramifications. Yet, algorithms for decision making under uncertainty usually lean on approximations or heuristics without quantifying their effect. Therefore, challenging scenarios could severely impair the performance of such methods. In this paper, we extend the decision making mechanism to the whole by removing standard 
approximations and considering all  previously suppressed stochastic sources of variability. On top of this extension,
our key contribution is a  novel framework to simplify decision making while assessing and controlling online the simplification’s impact. Furthermore, we present novel stochastic bounds on the return and characterize online  
the effect of simplification using this framework on a particular simplification technique - reducing the number of
samples in belief representation for planning. Finally, we verify the advantages of our approach through extensive simulations.
\end{abstract}

\IEEEpeerreviewmaketitle

\section{Introduction}

	

Autonomous online decision making is a fundamental aspect of intelligence.  In a partially observable setting, which is common in real world scenarios, there is no direct access to the state. Instead, the robot has to maintain a belief over the state, and reason about its evolution while accounting for different sources of uncertainty within the decision making stage. The renowned framework to do so is the Partially Observable Markov Decision Process (POMDP) \cite{Kaelbling98ai}.  A crucial element defining the robot's behavior is the reward operator.  

Solving a POMDP, i.e.,~calculating the ``right decision" in terms of an optimal action sequence or policy, involves anticipating every imaginable turn of future events and computing the \emph{returns}  based on the corresponding rewards. One common example of the return is the future cumulative reward.
The abundance of such possibilities, represented by a  belief tree,  induces a \emph{distribution over return} for every possible action plan. Therefore, choosing an optimal action (policy) is exceptionally computationally demanding \cite{papadimitriou1987complexity}.   Since a direct comparison of the distributions is not possible, a decision-maker must project the distribution of return, per a possible reactive future action sequence (policy) \cite{kochenderfer2015decision},  on some comparable space. Examples for such a projection are the expectation operator and risk-aware measures \cite{Defourny08ws},\cite{bouakiz1995target},\cite{wu1999minimizing}, such as conditional-value-at-risk (CVaR) \cite{Rockafellar00jr}.

There is a large body of algorithms to approximate decision making under uncertainty.  Classical offline methods are based on $\alpha$-vectors \cite{kurniawati2008sarsop} or value iteration \cite{bai2014integrated}.   
More recently, online methods became successful. These include,  for example, POMCP \cite{silver2010monte} and its various extensions (e.g.,~\cite{sunberg2017online}), an algorithm designed for large POMDP and based on Monte Carlo tree search. Another popular algorithm, DESPOT \cite{somani2013despot} \cite{Ye17jair}, focuses on the set of randomly sampled scenarios over the belief tree, avoiding drawbacks of  the UCT \cite{kocsis2006bandit} algorithm used in POMCP.  Presently, it has been further improved to tackle high dimensional spaces \cite{garg2019despot}. \citet{kurniawati2016online} present an interesting adaptive algorithm for dynamic environments. 
\begin{figure}[t]
	\centering 
	\includegraphics[width=\columnwidth]{./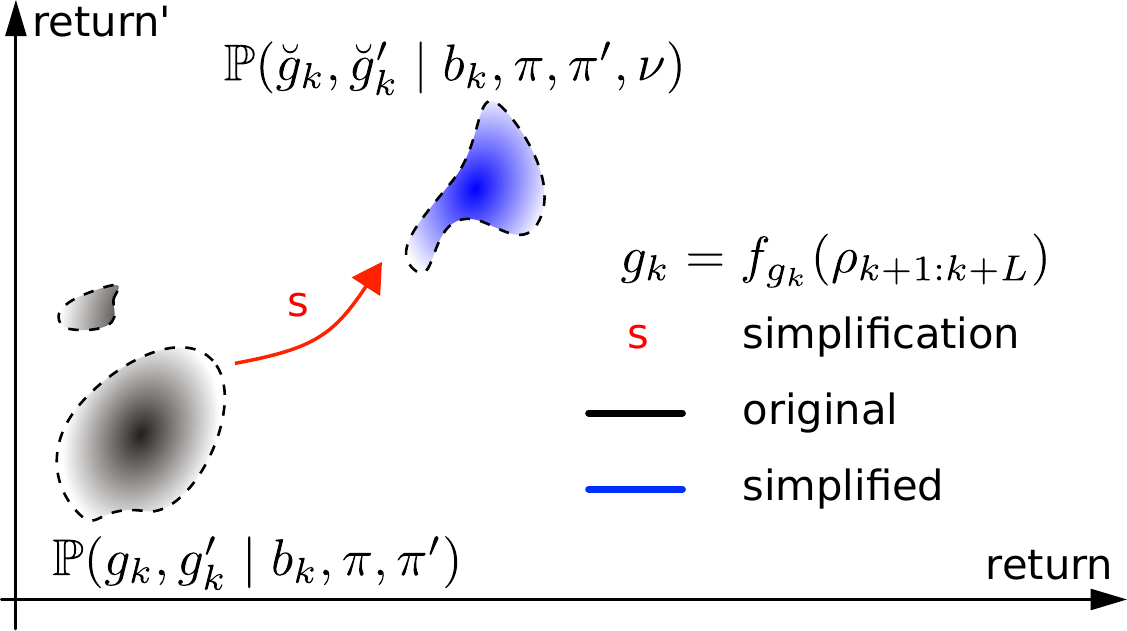}
	\caption{This figure shows alteration of the distribution of joint returns $g_k$ and $g'_k$ of two candidate policies $\pi$ and $\pi'$ as a result of simplification. Color intensity denotes distribution values. This is a conceptual illustration, i.e., we do not imply higher/lower rewards or change of support due to simplification.} \label{fig:JointBeliefRewards}
\end{figure}

Standard POMDP formulations consider state-dependent rewards. POMDP with \emph{belief-dependent rewards} received much less attention, although these rewards are essential in numerous problems, such as information gathering, autonomous navigation, and active sensing. Information theoretic rewards are especially significant for belief space planning (BSP) \cite{indelman2015planning}, \cite{fischer2020information}. \citet{araya2010pomdp} introduced $\rho$-POMDP and extended the exact $\alpha$-vectors method and a family of point based approximation algorithms to considering convex belief-dependent reward functions. Later \cite{fehr2018rho} extended their work further to  Lipschitz reward functions.  \citet{spaan2015decision} proposed to augment action space with information-reward actions.  \citet{dressel2017efficient} proposed an extension of SARSOP \cite{kurniawati2008sarsop} to specific forms of belief-dependent rewards.  

Previous techniques necessitate specific forms of reward operators. One standing-out approach \cite{farhi2019ix}, \cite{farhi2021ixBSP} presents incremental reuse of calculations between different planning sessions. Notably, that approach is formulated for general belief-dependent rewards.

We take a different path, which is to simplify the original decision making problem. In other words, instead of approximating the problem, we substitute it with a simpler one. If the order of policies with respect to the original and simplified problems' objective is preserved,  such substitution does not affect the decision making quality. Moreover, if, utilizing the simplified problem,  we can find online bounds over the returns or objective function of the original problem, it is possible to account for the simplification loss. Replacement of various parts of the decision making problem to ease the computation burden while preserving the precedence of objectives for potential action plans recently appeared in the literature under the name action consistency \cite{elimelech2020fast},\cite{elimelech2019efficient},\cite{kitanov2019topological}. Yet, these works considered a limited setting of a specific projection operator, Gaussian distributions, and maximum likelihood observations. Importantly, they formulated action consistency in an absolute way. However, an absolute action consistent simplification is not easy to achieve in complex general scenarios: to find such action consistent simplification, one has to preserve action trends for all conceivable realizations of the future for every potential sequence of actions, even if considering a specific projection operator (expectation). Unfortunately, this is exceptionally challenging without the assumption of maximum likely observation.

In this paper, we introduce a more general framework   that allows to reason leniently about simplifications that mostly, or partially, preserve action ordering, i.e.,~for some of the possible future return realizations of different actions. Given a user-provided threshold on the loss incurred by simplification, one can assess the probability of suffering loss larger than the threshold and thus provide performance guarantees. 
    
We focus  on the distribution of the returns.  This distribution conveys all the information about the decision making problem.  Our goal is to examine how the simplification method influences the performance of the decision-maker. The simplification impacts the joint distribution of the returns, as illustrated in Fig.~ \ref{fig:JointBeliefRewards}. A simplification technique could affect the dependency between returns marginals \cite{indelman2016no}, as well as the marginals themselves \cite{elimelech2019efficient}.

Ultimately, we shall perform our analysis in an online setting, meaning, without accessing the original problem. In other words, we are permitted to use only the ingredients of the simplified problem to quantify the simplification effect. Therefore, we intend to quantify distribution over loss online.

Furthermore, our study's center is belief-dependent general rewards.  In the most general formulation, reward calculation and belief update are stochastic methods.   Typically the sources of the stochasticity are sample approximations. 
To our knowledge, all existing works adopting the POMDP setting do not consider these stochastic aspects. Conventionally the reward approximation and the belief update are considered deterministic. We relax this assumption to account for all distortions contracted due to simplification. Note that, stochasticity of the reward operator naturally cancels the assumptions of convexity or Lipshitz continuity.

To summarize, our key contributions are as follows.  
(a) We extend $\rho$-POMDP to probabilistic $\rho$-POMDP (\extendedpomdp) by relaxing the assumption that the reward operator and the belief update are deterministic; (b) We  introduce the general concept of  \ploss;  (c) We provide an online characterization of the \ploss;  (d) Finally, we exemplify our framework on a particular simplification technique, which is reducing the number of samples for planning. 


\section{Notations and Problem Formulation}
Let us denote by $\probd$ the probability density function and by $\prob$ the probability. By lowercase letter we denote a random vector or its realization.  For two random variables $x$ and $y$, we say that they are equal $x=y$ if they are equal as functions on their measurable space. Further, to shorten notations, we shall often use $\square_{k+}$ to denote $\square_{k+1:k+L}$, where $L$ is the planning horizon. By $\equiv$ we denote identity. 

\subsection{$\rho$-POMDP}
Let $k$ be an arbitrary time step.  
$\rho$-POMDP \cite{araya2010pomdp} is a tuple
\begin{equation}
	\langle \mathcal{X}, \mathcal{A}, \mathcal{Z}, T, O, \rho, \gamma, b_{0}\rangle, \label{eq:POMDP}
\end{equation} 
where $\mathcal{X}, \mathcal{A}, \mathcal{Z}$ are state, action, and observation spaces with $x_{k} \in \mathcal{X}, a_{k} \in \mathcal{A}, z_{k} \in \mathcal{Z}$ the momentary state, action, and observation,  respectively, $T(x_k,a_k,x_{k+1})= \mot{x_{k}}{a_{k}}{x_{k+1}} $ is the stochastic transition model from the past momentary state $x_k$ to the next  $x_{k+1}$ through action $a_k$, $O(z_k,x_k)=\observ{x_k}{z_k}$ is the stochastic observation model, $\rho\left(b_{k+1}, a_k\right)$ is a scalar belief and previous action dependent reward operator, $\gamma \in [0,1]$ is the discount factor, and $b_0$ is the belief about the initial state (prior). Throughout this paper we assume that $\gamma=1$. 

\subsection{Belief Space Planning}
The posterior belief at time instant $k$ is given by 
\begin{equation}\label{eq:belief}
	b_k(x_k) \approx \probd \left(x_k \sep b_0, a_{0:k-1}, z_{1:k} \right), 
\end{equation}
The belief is an efficient way of storing all relevant information that is obtainable so far. The usual assumption is that the belief is a sufficient statistic for decision making objective \cite{bertsekas2017dynamic}. However, in practice, the belief requires some representation. In general, this representation is not perfect, e.g.,~parametric or sampled form; thus, in \eqref{eq:belief}, we used the $\approx$ sign. In a real life scenario 
\begin{equation}	
	b_k = \psi(\psi(\dots \psi(b_0, a_0, z_1), a_{k-2}, z_{k-1} ), a_{k-1}, z_{k}),
\end{equation}
where $\psi$ is a method for updating the belief. 

Denote by $\pi_{\ell}$ policy at time step $\ell$ such that  $\pi_{\ell}(b_{\ell})=a_{\ell}$ maps belief to the action. It is noteworthy that policy $\pi(b)$ is a random function of the belief in general. For simplicity we assume that policy is deterministic. However, our development is not constrained to deterministic policies.  By $\pi \bydef \pi_{k:k+L-1}$ we denote a vector of policies for $L$ time steps starting from time step $k$.

To behave optimally, the robot shall choose a policy maximizing the objective, which in its general form, is 
\begin{eqnarray}
	&	J^L(b_k, \pi)= 
	\varphi \bigg(\probd \left(\rew_{k+1:k+L} \sep b_k, \pi_{k:k+L-1} \right), \ret \bigg) \label{eq:Obj}\\
	&\quad  \ \ \text{s.t. } b_{\ell}=\psi(b_{\ell-1}, \pi_{\ell-1}(b_{\ell-1}), z_{\ell}), \nonumber
\end{eqnarray}
where $L$ is a planning horizon, $\rew_{\ell}$ is a random reward, $\varphi$ is a projection operator, and 
$\ret \bydef f_{\ret}(\rew_{k+})$  is the return \cite{sutton2018reinforcement}. The return is some known function of the realization of  $\rew_{k+1:k+L}$; as discussed in \cite{Defourny08ws}, e.g.,~it could correspond to the cumulative reward $\ret = \sum_{\ell=1}^L \rew_{k+\ell}$.
If \eqref{eq:Obj} admits Bellman form, it can be written as
\begin{align}
	\label{eq:ObjBellman}
	&J^L(b_k, \pi_{k:k+L-1})=  \\
	&  J^1(b_k, \pi_{k}) + \int_{b_{k+1}} \probd(b_{k+1}|b_k,\pi_k) J^{L-1}(b_{k+1}, \pi_{k+1:k+L-1}) \nonumber \\
	&\text{s.t. } b_{\ell}=\psi(b_{\ell-1}, \pi_{\ell-1}(b_{\ell-1}), z_{\ell}). \nonumber
\end{align}
For example, a common choice for $\varphi$ is expectation over the distribution of future rewards given all data available \cite{Defourny08ws}.
However, our formulation considers a general projection operator.   

$\psi$ is a general  method for propagating the belief with action and updating it with the received observation. 
Sometimes the belief $b_{\ell-1}$ has a simple parametric form  $\theta_{\ell-1} $, where $\theta_{\ell-1}$ is the vector of parameters, e.g.,~a Gaussian belief. In this case, belief update $\psi$ can be deterministic, and is denoted by $\psi_{dt}(\theta_{\ell-1}, \pi_{\ell-1}(\theta_{\ell-1}), z_{\ell})$, where the subscript $dt$ stands for deterministic.  
In more general and challenging scenarios the belief $b_{\ell-1}$ is given by a set of weighted  samples  $ \{(w^i_{\ell-1}, x^i_{\ell-1})\}^N_{i=1}$. Therefore, $\psi$ is a stochastic method, e.g.,~a particle filter \cite{thrun2005probabilistic}. Applying multiple times $\psi$ on the same input will yield different sets of samples approximating the same distribution of the posterior belief. We denote the stochastic $\psi$ by $\psi_{st}(b_{\ell-1}, \pi_{\ell-1}(b_{\ell-1}), z_{\ell})$. Thus,  $\psi_{st}$ is a random function of the previous belief, an action and the observation.  
Note also another common situation where $b_{\ell-1}$ is parameterized, but there is no closed form update. In this case,  $\psi$ is also a stochastic method. 
Another form to formulate the above is that the distribution 
\begin{align}
	B(b_{\ell-1}, \pi_{\ell-1}(b_{\ell-1}),z_{\ell},b_{\ell}) \bydef \probd_B \left(b_{\ell} \sep b_{\ell-1}, \pi_{\ell-1}, z_{\ell}\right), \label{eq:StochBeliefUpdate}
\end{align}
is not a Dirac delta function. This aspect was disregarded so far, to the best of our knowledge. Note that in a belief-MDP formulation, the assumption is that $B$ is a Dirac delta function. We emphasize relation to belief-MDP  in Appendix~\ref{app:DiscussionBeliefMDP}.

Similar arguments also hold for the momentary reward operator of the belief and the previous action. In its pure theoretical form, the momentary reward is a  deterministic operator  of the posterior belief and possibly an action. For example, a common immediate reward is of the form 
\begin{align}
	\rew_{dt}(b) = \mathbb{E}_{x \sim b}\left[ f(b(x),x) \right]=\int_x b(x) f(b(x),x)dx, \label{eq:DetReward}
\end{align}  
where usually $f(b(x),x)= - \log b(x)$ or some reward on the state $f(b(x),x)= r(x)$, producing differential entropy or mean distance to goal. 
Unfortunately, an analytical expression for the reward operator $\rew_{dt}(\cdot)$ is available in only limited scenarios, e.g., if the belief is modeled as Gaussian and the reward is differential entropy. The representation of the beliefs in \eqref{eq:StochBeliefUpdate} dictates practical reward operators. Sometimes the deterministic operator can be constructed on top of a particular belief representation. E.g., \eqref{eq:StochBeliefUpdate} outputs a set of weighted samples and \eqref{eq:DetReward} is adapted to be a deterministic operator of this output \cite{boers2010particle}. However, it is not always possible. In extremely challenging situations the reward includes modification of the representation of the belief. This could introduce an additional source of stochasticity. 
We extend \eqref{eq:DetReward} to  
\begin{align}
	R(b_{\ell}, \pi_{\ell-1}(b_{\ell-1}), \rew_{\ell}) \bydef \probd_R \left(\rew_{\ell} \sep b_{\ell} ,\pi_{\ell-1}(b_{\ell-1})\right), \label{eq:StochRewardUpdate}
\end{align}  
embracing these possibilities.  
To our knowledge, we are the first who treat these aspects as random. 

\subsection{Problem Formulation}
Our goal is to generalize the concept of absolute action consistency to probabilistic, more lenient. Given some simplification method, we want to affirm \emph{online} what is the loss and provide probabilistic guarantees. Our approach to analysis and reasoning shall be agnostic to the choice of the projection operator $\varphi$.  
\section{Approach}
\begin{figure}[t]
	\centering 
	\includegraphics[width=\columnwidth]{./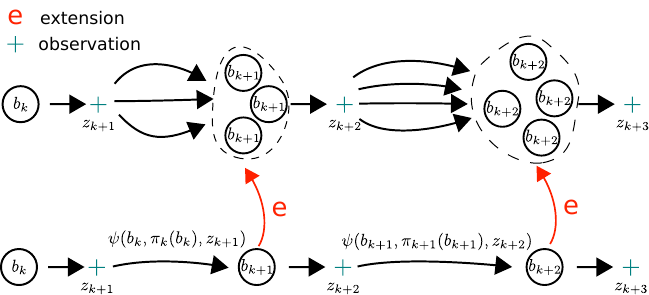}
	\caption{Illustration of one branch of the extended belief tree. In a conventional setting (bottom), under the policy $\pi$, a specific realization of observations $z_{k+1:k+3}$ defines the beliefs along the way. In our extended setting (top), that is not the case, as discussed in text. It is customary to choose the same beliefs used to build the tree to obtain reward distribution or samples from the reward. We decoupled beliefs from the tree and beliefs from the reward calculation. By the red arrow, we denote our extension (red e). } \label{fig:ExtendedBeliefTree}
\end{figure}
\subsection{Probabilistic $\rho$-POMDP}
To capture the complete simplification impact, we shall account for all potential sources of variability. We remove conventional approximations by extending \eqref{eq:POMDP} to a probabilistic reward model $R$ \eqref{eq:StochRewardUpdate} and probabilistic belief update $B$ \eqref{eq:StochBeliefUpdate}, and introduce
\begin{align}
	M=\langle \mathcal{X}, \mathcal{A}, \mathcal{Z}, T, O, R, \gamma, b_{k}, B\rangle, \label{eq:ExtendedPOMDP}	 
\end{align}
which we name probabilistic $\rho$-POMDP (\extendedpomdp). The rationale behind these conditional distributions ($R$ and $B$) is to capture additional sources of stochasticity, such as stochastic belief update, stochastic calculation of a given reward operator or simply not knowing the operator reward in explicit analytic form. 
These previously overlooked sources of stochasticity impact the likelihood of the observations 
\begin{align}
&\probd \left(z_{k+1:k+L} \sep b_k, \pi \right), \label{eq:ObsLikelihood}
\end{align} 
as well as the joint reward distribution $\probd \left(\rew_{k+} \sep b_k, \pi, z_{k+} \right) \equiv \probd \left(\rew_{k+1:k+L} \sep b_k, \pi_{k:k+L-1}, z_{k+1:k+L} \right)$ given a realization of future observations. The latter can be factorized as 
\begin{equation}
	\label{eq:ConditionalReward}
	\prod_{\ell=k+1}^{k+L} \int_{b_{\ell}} \probd_{R}  \left(\rew_{\ell} \sep b_{\ell}, \pi_{\ell-1}\right) \probd_{B} \left(b_{\ell}  \sep  b_{\ell-1}, \pi_{\ell-1}, z_{\ell}  \right) ,
\end{equation}
which is Dirac's delta function in the regular setting of POMDP and $\rho$-POMDP.  If $B$ is a Dirac function, a sample from \eqref{eq:ObsLikelihood} uniquely defines the corresponding posterior beliefs $b_{k+1:k+L}$. This, therefore, corresponds to the classical belief tree. In contrast, our \extendedpomdp \eqref{eq:ExtendedPOMDP}, corresponds to an \emph{extended} belief tree, which, due to  \eqref{eq:StochBeliefUpdate}, 
allows many samples of the beliefs $b_{k+1:k+L}$ for each sample of $z_{k+1:k+L}$ from \eqref{eq:ObsLikelihood}. We illustrate this in Fig.~\ref{fig:ExtendedBeliefTree}.

\subsection{Simplification Formulation} \label{sec:SimplificationFormulation}
To formally define the  simplification procedure, we augment the \extendedpomdp tuple \eqref{eq:ExtendedPOMDP} with a simplification operator $\nu$, 
\begin{equation}
	M_{\nu} =\langle \mathcal{S}, \mathcal{A}, \mathcal{Z}, T, O, R, \gamma, b_{k}, B, \nu\rangle,  \ \ \  \nu \bydef \nu_k, \dots, \nu_{k+L}. \label{eq:POMDPnu}
\end{equation} 
This general operator defines any possible modification of the original problem defined by \eqref{eq:ExtendedPOMDP} alongside with \eqref{eq:Obj} to a new, simpler to solve, problem. The operator $\nu$ can be for example, sparsification of the initial belief $b_k$ \cite{elimelech2019efficient}, substitution of the operator differential entropy by a simpler operator, e.g.,~trace of covariance matrix, discarding the normalizer in the differential entropy operator \cite{ryan2008information}, replacing the reward by its topological signature \cite{kitanov2019topological}. 

To distinct a simplified reward from the original reward, we denote the former by $\rewsimpl$ instead of $\rew$; similarly, we denote the 
simplified belief by $\bsimpl$ instead of $b$. Note the operator $\nu$ can be stochastic, as discussed below.

Specifically, belief simplification is described by the distribution
\begin{align}
	\probd( \bsimpl_{\ell} \sep b_{\ell}; \nu^{b}_{\ell}). \label{eq:Stochnu} 
\end{align}
In general, the distribution \eqref{eq:Stochnu} over the simplified belief $\bsimpl_{\ell}$ corresponds to a stochastic simplification operator $\nu^{b}_{\ell}$. This is the case, for example, when $b_{\ell}$ is represented by a set of $N$ weighted samples and $\nu_{\ell}^b$ is the operation of subsampling $n$ samples according to weights; i.e., applying this operation on $b_{\ell}$ multiple times leads to different sets of $n$ samples, each representing another realization of $\bsimpl_{\ell}$ from \eqref{eq:Stochnu}. For a deterministic operator $\nu_{\ell}^b$, $\eqref{eq:Stochnu}$ is a Dirac function. 

\begin{figure}[t] 
	\centering
	\begin{minipage}[t]{0.49\columnwidth}
	\centering 
	\includegraphics[width=\textwidth]{./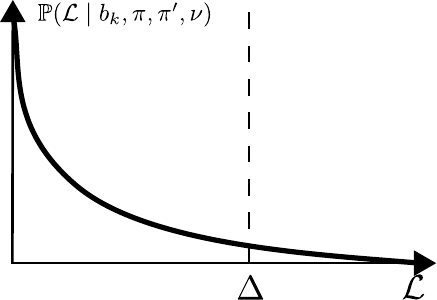}
	\subcaption{}
	 \label{fig:LossD}
    \end{minipage}%
	\hfill
	\begin{minipage}[t]{0.49\columnwidth}
	\centering 
	\includegraphics[width=\textwidth]{./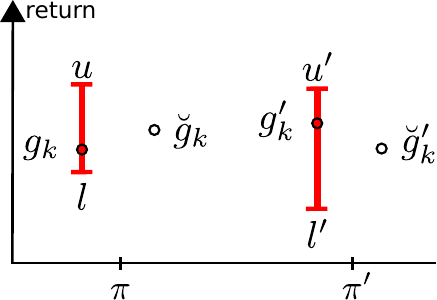}
	\subcaption{}
	\label{fig:RewardsSamples}
    \end{minipage} 
	\caption{Illustration of \textbf{(a)} the distribution of loss, and \textbf{(b)} the online bounds of the return.} 
\end{figure}

Further, there are several cases of how a simplification affects belief update \eqref{eq:StochBeliefUpdate} from time $\ell-1$ to $\ell$. 
\begin{enumerate}
	\item Without any simplification we have $\probd_B(b_{\ell} \sep b_{{\ell}-1},\pi_{\ell-1},z_{\ell})$ from \eqref{eq:StochBeliefUpdate}.
	\item Given a simplified belief $\bsimpl_{\ell-1}$, while keeping the original stochastic belief update $\psi_{st}$, we have $\probd_{B}(\bsimpl_{\ell} \sep \bsimpl_{\ell-1},\pi_{\ell-1},z_{\ell})$, where each realization of $\bsimpl_{\ell}$ is obtained via $\psi_{st}$. Thus, given $\bsimpl_{{\ell}-1}$, this  distribution is not a function of $\nu$.
	\item We can also simplify the belief update operator, $\psi_{st}$, to $\breve{\psi}_{st}$. Denoting the corresponding simplification operator   $\nu_{\ell}^{\psi}$, this yields  $\probd_{\breve{B}}(\bsimpl_{\ell} \sep \bsimpl_{\ell-1},\pi_{\ell-1},z_{\ell}; \nu_{\ell}^{\psi})$.
	\item Finally, one can decide at time $\ell$ to apply simplification on the belief (determined by $\nu_{\ell}^b$) via \eqref{eq:Stochnu}. 
	The corresponding belief update can be written as
 	$\probd_{\breve{B}}(\bsimpl_{\ell} \sep \bsimpl_{\ell-1},\pi_{\ell-1},z_{\ell}; \nu_{\ell}^b, \nu^{\psi}_{\ell})=\int_{\tilde{b}_{\ell}} \probd( \bsimpl_{\ell} \sep  \tilde{b}_{\ell}; \nu_{\ell}^b) \probd_{\breve{B}}( \tilde{b}_{\ell} \sep \bsimpl_{\ell-1},\pi_{\ell-1},z_{{\ell}}; \nu_{\ell}^{\psi})$, where  $\tilde{b}_{\ell}$ is the integration variable.  
\end{enumerate}
We combine these cases and write 
\begin{align}
&\breve{B}\left(\!\bsimpl_{\ell-1}, \pi_{\ell-1},z_{\ell},\bsimpl_{\ell}; \nu \! \right)  \!\bydef \! 
\probd_{\!\breve{B}}(\bsimpl_{\ell} \sep \bsimpl_{\ell-1},\pi_{\ell-1},z_{\ell}; \nu_{\ell}^b, \nu^{\psi}_{\ell}).
 \label{eq:SimplStochBeliefUpdate} 
\end{align} 
Similarly, reward simplification could be, in general, stochastic, leading to the distribution
\begin{align}
\probd( \rewsimpl_{\ell} \sep \rew_{\ell}; \nu^{\rho}_{\ell}). \label{eq:Stochnureward} 
\end{align}  
Thus, given a simplified belief $\bsimpl_{\ell}$ and $\bsimpl_{\ell-1}$, and recalling \eqref{eq:StochRewardUpdate}, the distribution over $\rewsimpl_{\ell}$ is
$$\probd_{\breve{R}}(\rewsimpl_{\ell} \sep \bsimpl_{\ell}, \pi_{\ell-1}(\bsimpl_{\ell-1}); \nu) \! = \!\! \int_{\tilde{\rew}_{\ell}} \! \! \! \! \probd( \rewsimpl_{\ell} \sep \tilde{\rew}_{\ell}; \nu^{\rho}_{\ell}) \probd_{R}( \tilde{\rew}_{\ell} \sep \bsimpl_{\ell}, \pi_{\ell-1}(\bsimpl_{\ell-1})),$$
which we denote as the simplified reward model, 
\begin{align}
&\breve{R}(\bsimpl_{\ell}, \pi_{\ell-1}(\bsimpl_{\ell-1}), \rewsimpl_{\ell}; \nu) \bydef \probd_{\breve{R}} \left(\rewsimpl_{\ell} \sep \bsimpl_{\ell}, \pi_{\ell-1}(\bsimpl_{\ell-1}); \nu \right). \label{eq:SimplStochRewardUpdate} 
\end{align} 
Consequently,  the models \eqref{eq:SimplStochBeliefUpdate} and \eqref{eq:SimplStochRewardUpdate} impact \eqref{eq:ConditionalReward}, and lead to the following simplified joint reward distribution given a realization of future observations 
\begin{align}
\label{eq:ConditionalJointTimeReward}
&\probd \left(\rewsimpl_{k+} \sep b_k, \pi, z_{k+}, \nu \right)  = \int_{\bsimpl_{k}} \probd( \bsimpl_{k} \sep b_{k}; \nu^{b}_{k}) \prod_{\ell=k+1}^{k+L} \int_{\bsimpl_{\ell}}   \\
&\probd_{\breve{R}} \left(\rewsimpl_{\ell} \sep \breve{b}_{\ell}, \pi_{\ell-1}; \nu\right)\probd_{\breve{B}} \left(\bsimpl_{\ell} \sep \bsimpl_{\ell-1}, \pi_{\ell-1}, z_{\ell}; \nu \right) . \nonumber
\end{align}
The formulations of \eqref{eq:SimplStochBeliefUpdate} and \eqref{eq:SimplStochRewardUpdate} assume that inputs to $\breve{B}$ and $\breve{R}$ are most simplified versions at appropriate time instant $(\rewsimpl_{\ell}, \bsimpl_{\ell}, \bsimpl_{\ell-1})$. This simplification approach uses only the observations from the belief tree. In the sequel, we explain why it is advantageous. In this setting, in addition to maintaining/updating $b_{\ell}$, we also have to maintain/update the simplified version $\bsimpl_{\ell}$.  

Alternatively, the update of simplified belief $\bsimpl_{\ell}$ can be avoided. Such simplification builds upon samples from $B$, which are already present at the belief tree. Thus,  
\begin{align}
\label{eq:ConditionalJointTimeReward2}
&\probd \left(\rewsimpl_{k+} \sep b_k, \pi, z_{k+}, \nu \right) = \int_{\bsimpl_{k}} \probd( \bsimpl_{k} \sep b_{k}; \nu^{b}_{k})  \prod_{\ell=k+1}^{k+L} \int_{b_{\ell}} \int_{\bsimpl_{\ell}}\\
&  \probd_{\breve{R}}(\rewsimpl_{\ell} \sep \bsimpl_{\ell}, \pi_{\ell-1}(\bsimpl_{\ell-1}); \nu)  \probd( \bsimpl_{\ell} \sep b_{\ell}; \nu)  \probd_{B} \left(b_{\ell} \sep b_{\ell-1}, \pi_{\ell-1}, z_{\ell}\right). \nonumber
\end{align}
To approximate \eqref{eq:ConditionalJointTimeReward2} one can use original beliefs from the extended belief tree or sample again if needed. 

In other words, in this paper we assume that operator $\nu$ affects exclusively \eqref{eq:ConditionalReward}. However, the measurements are sampled as in the original problem as in \eqref{eq:ObsLikelihood}.

\subsection{Probabilistic Loss (\ploss)}
In the previous section, we defined a simplification procedure that results in a corresponding new decision making problem that should be easier to solve. Now we stipulate on the quality of the simplification for two candidate policies $\pi$ and $\pi'$.

From $\probd \left(\rew_{k+} \sep b_k, \pi, z_{k+}, \nu \right)$  and $\probd \left(\rewsimpl_{k+} \sep b_k, \pi, z_{k+}, \nu \right)$ we arrive at the distribution of the  original as well as simplified returns  $\probd (\ret  \sep  b_k, \pi)$ and $\probd (\retsimpl  \sep  b_k, \pi, \nu)$ for each evaluated candidate policy.  
To quantify the impact of the simplification procedure, we shall consider the \emph{joint} distribution $\probd (\ret, \ret', \retsimpl, \retsimpl'  \sep  b_k, \pi, \pi', \nu)$. 
Our goal is to examine how the simplification procedure alters the joint distribution $\probd (\ret, \ret' \sep  b_k, \pi, \pi')$  towards $\probd (\retsimpl, \retsimpl'  \sep  b_k, \pi, \pi', \nu)$.  These two marginal distributions are illustrated in Fig.~\ref{fig:JointBeliefRewards}.


Let us define the following random variable, which we shall refer to as "loss"
\begin{equation}
	\mathcal{L} \bydef f_{\mathcal{L}}(\ret,\ret',\retsimpl, \retsimpl') = 
	\begin{cases} \max \{\ret'- \ret, 0 \} &\mbox{if } \retsimpl  -\retsimpl' >0,  \\
	 \max \{ \ret- \ret', 0\} &\mbox{if } \retsimpl  - \retsimpl' <0,\\
	 0 &\mbox{else. }\end{cases} \label{eq:plossExpr}
\end{equation} 
With \eqref{eq:plossExpr} we aim to capture a complete impact of a  simplification onto the decision making problem. 
Specifically, this definition captures for each possible realization of $\ret, \ret', \retsimpl, \retsimpl'$  the absolute difference between the original returns $\Delta=|\ret'- \ret|$ in case action trend was not preserved on this realization. Meaning, at this realization, the optimal actions of original and simplified problems would differ. Given a sample $(\ret, \ret', \retsimpl, \retsimpl')$, the simplification is action consistent at this sample if the sign of the difference of the returns is preserved. In other words, the same action would be identified as optimal with the original and simplified returns; else we must account for the loss \eqref{eq:plossExpr}. 

Our object of interest is the distribution  density of $\mathcal{L}$ given all the information available  at our disposal, 
\begin{align}
	\probd \left(\mathcal{L} \sep b_k, \pi, \pi', \nu \right). \label{eq:ploss}
\end{align} 
We denote this distribution  by Probabilistic Loss (\ploss), as it generalizes the concept of absolute action consistency to  probabilistic. See illustration in Fig.~\ref{fig:LossD}. E.g., if \eqref{eq:ploss} is the Dirac delta function $\delta(\mathcal{L})$, the simplification method is absolute action consistent for every possible operator projection $\varphi$. 

Moreover, for any $\Delta$, its cumulative distribution function (CDF) $\probd \left(\mathcal{L} \leq \Delta \sep b_k, \pi, \pi', \nu \right)$ provides probability to suffer loss at most $\Delta$. 
Similarly, the tail distribution function (TDF) $\probd \left(\mathcal{L} > \Delta \sep b_k, \pi, \pi', \nu \right)$ provides probability to suffer loss greater than $\Delta$. 
We shall revisit and discuss these aspects further in Section \ref{sec:CharacterOnline}.

\subsection{Decomposition of Returns}\label{subsec:DecomposeRewards}

The source of distribution  \eqref{eq:ploss} is 
\begin{align}
	& \probd (\ret, \ret', \retsimpl, \retsimpl'  \sep  b_k, \pi, \pi', \nu), \label{eq:QuadripleRewDist}
\end{align}
i.e.,~the joint distribution over original and simplified returns of both policies. This distribution 
decomposes via marginalization over future observations $z_{k+}\equiv z_{k+1:k+L}$  and $z'_{k+}\equiv z'_{k+1:k+L}$ as 
\begin{eqnarray}
	\int_{{\substack{z_{k+} \\ z'_{k+} }}}  \probd (\ret, \ret', \retsimpl, \retsimpl'  \sep  b_k, \pi, \pi', \nu, z_{k+}, z'_{k+} ) \cdot 
	\\ 
	\ \ \ \ \probd (z_{k+}, z'_{k+} \sep b_k, \pi, \pi') dz_{k+} dz'_{k+},
\nonumber
\end{eqnarray}
which, according to \eqref{eq:StochBeliefUpdate}, \eqref{eq:StochRewardUpdate} and \eqref{eq:SimplStochBeliefUpdate}-\eqref{eq:SimplStochRewardUpdate}, decomposes to
\begin{eqnarray}
	\int_{{\substack{z_{k+} \\ z'_{k+} }}}  \probd (\ret,  \retsimpl \sep  \his_{k+L}, \nu) \probd (\ret', \retsimpl'  \sep  \his'_{k+L}, \nu) \cdot \label{eq:JointRewardsDecomposed}
	\\ 
	\ \ \ \ \probd (z_{k+}, z'_{k+} \sep b_k, \pi, \pi') dz_{k+} dz'_{k+},
	\nonumber
\end{eqnarray}
where $\his_{k+L} \bydef \{b_k, \pi, z_{k+}\}$ and  $\his'_{k+L} \bydef \{b_k, \pi', z'_{k+}\}$. Note that the belief $b_k$  is shared by both histories.

In other words, the simplification operator $\nu$ independently affects  each realization of the future. 
Given two such realizations $(\his_{k+L}, \his'_{k+L}, \nu)$, the pairs of original and simplified returns are statistically independent of all other rewards.  This crucial observation will be significant in the sequel. 
\begin{figure}[t]
	\centering 
	\includegraphics[width=\columnwidth]{./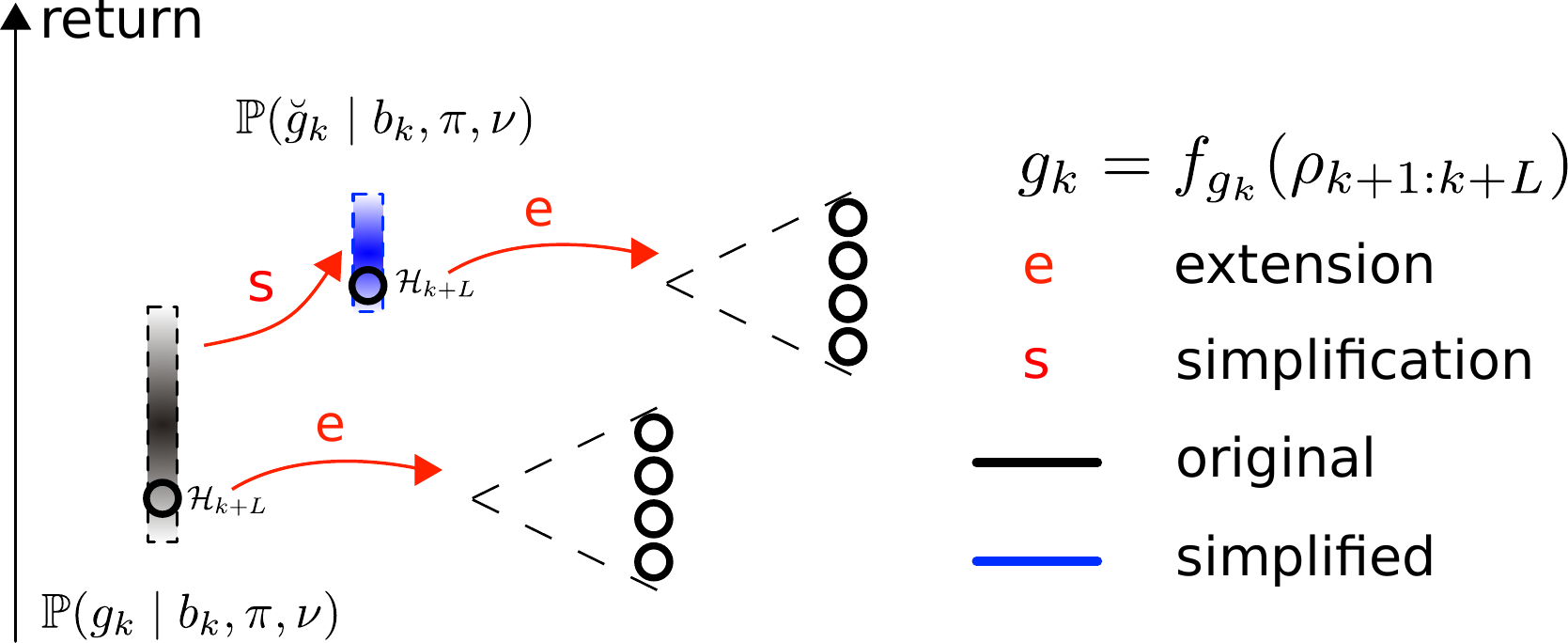}
	\caption{ Our extended setting permits variability of the reward given the present and a realization of the future. On the contrary, in a conventional setting, \eqref{eq:RewSimplRewCond} is always a Dirac delta function.} \label{fig:Extension}
\end{figure}
\subsection{Online Bound on Probabilistic Loss (\pbloss)}

The distribution defined by \eqref{eq:ploss} requires  access to \eqref{eq:QuadripleRewDist}
which we do not have in an online setting. To circumvent the requirement of accessing $\ret$ and $\ret'$, we propose to substitute  them by online lower and upper bounds  $\lb, \ub$ and $\lb', \ub'$, respectively. These bounds should be accessible without knowledge of original returns.

Let us consider a sampled return realization $(\ret, \ret', \retsimpl, \retsimpl') \sim \probd (\ret, \ret', \retsimpl, \retsimpl'  \sep  b_k, \pi, \pi', \nu)$ from \eqref{eq:QuadripleRewDist}.  As in an online setting we do not actually have access to the original returns $(\ret, \ret')$, we strive to bound the latter, 
\begin{align}
	\lb\leq \ret \leq \ub, \ \lb' \leq \ret' \leq \ub', \label{eq:Bounds}
\end{align}
where, for now, we assume \eqref{eq:Bounds} holds for any sample of $(\ret, \ret', \retsimpl, \retsimpl')$; for example, these could be analytically-derived bounds. This setting is illustrated in Fig.~\ref{fig:RewardsSamples}. However, in Section \ref{subsec: online_stoch_bounds} we also discuss a more general setting where we  allow \eqref{eq:Bounds} to be violated with probability larger than zero.

Using these bounds we are able to define online a bound on loss \eqref{eq:plossExpr} \emph{without} accessing the original problem ($R$ and $B$),  
\begin{align}
	&\bar{\mathcal{L}} \bydef \! f_{\bar{\mathcal{L}}}(\retsimpl, \lb, \ub, \retsimpl', \lb' , \ub') \! = \!\! \begin{cases} \max \{\ub' - \lb ,0 \} &\mbox{if } \retsimpl - \retsimpl' >0, \\
		\max \{ \ub - \lb' ,0 \} &\mbox{if } \retsimpl -  \retsimpl' <0, \\
		0 &\mbox{else. }\end{cases} \label{eq:pblossExpr}
\end{align}
Note that sometimes we can find bounds over the returns by applying the same function $f_{\ret}$ on the bounds on the momentary rewards (returns when $L=1$), e.g., in case of cumulative reward $	\ub = \sum_{\ell =k+1}^{k+L} \ub_{\ell}$ and $\lb = \sum_{\ell =k+1}^{k+L} \lb_{\ell}$. 
However, this is not always possible, e.g.,~if $\ret$ deviates from the sum of momentary rewards or in the case of  Bellman form \eqref{eq:ObjBellman}. Sometimes it is, therefore, better to work with momentary bounds. 

In an online setting, we are interested in the distribution density of $\bar{\mathcal{L}}$, 
\begin{align}
	\probd \left(\bar{\mathcal{L}} \sep b_k, \pi, \pi', \nu \right), \label{eq:pbloss}
\end{align} 
which we denote by Probabilistic Bound on Loss (\pbloss). 

As we discuss in Section \ref{sec:CharacterOnline}, \pbloss characterizes the impact of a  simplification in an online setting; thus, it enables to determine online if  a candidate simplification is acceptable given a user-specified criteria. The decision to either accept or decline a (candidate) simplification is guided by probabilistic guarantees, as provided by our approach.

\subsection{Online Stochastic Bounds} \label{subsec: online_stoch_bounds}
Our extension  allows $R$ and $B$, as well as $\breve{R}$ and $\breve{B}$ to be any distributions. They can remain Dirac functions,  e.g.,~if belief update and the reward calculation have a closed form
\begin{align}
\probd \left(\rew_{\ell} \sep b_{\ell-1}, a_{\ell-1}, z_{\ell} \right) \!=\!\delta(\rew_{\ell} \!-\!\rew_{dt}(\psi_{dt}( \theta_{\ell-1}, a_{\ell-1}, z_{\ell}))). \label{eq:Degenerate}
\end{align}
In such a case, conditioned on $(\his_{k+L}, \his_{k+L}', \nu)$, the returns $(\ret, \ret', \retsimpl, \retsimpl')$ are deterministic. 

However, in the more general case, following our extension, there is a joint distribution of original and simplified returns given a realization of the future and the present,
\begin{align}
	\probd (\ret, \retsimpl \sep  \his_{k+L}, \nu), \label{eq:RewSimplRewCond}
\end{align}	 
as illustrated in Fig.~\ref{fig:Extension}. Since \eqref{eq:RewSimplRewCond} is no longer a Dirac function, we can use knowledge about this distribution to design bounds, which will hold with \emph{some} probability.

In section \ref{sec: spec}, we show that it is possible to harness the structure of \eqref{eq:RewSimplRewCond} to design the mentioned more  lenient online bounds. 
Our framework permits to detach the process of estimation of the bounds from the realization of the reward and truly use all accessible information in a simplified problem. For example, one way to design probabilistic bounds is to find online a random variable $\epsilon$ such that the probability 
\begin{align}
\probd (|\ret - \retsimpl | \leq \epsilon \sep  \his_{k+L}, \nu)
\end{align}
is bounded from below. The corresponding probabilistic lower and upper bounds will be $\lb = \retsimpl - \epsilon$ and $\ub = \retsimpl +\epsilon$, respectively. We, therefore, refer to $\lb$ and $\ub$ as random variables.
In our setting, even if the bounds actually bound with very low probability, it is still possible to analyze the quality of the simplification.  Moreover, the analytical bounds,  designed in a conventional setting,  can be used in our extended setting without any revision. In our extended environment, they will bound with probability one.

Having introduced the novel stochastic bounds, we proceed to the formulation of the constraints, that these bounds shall fulfill to be meaningful. Our goal is to formulate conditions, which assure that \pbloss \eqref{eq:pbloss} is connected to \ploss \eqref{eq:ploss} and can be used online to analyze the quality of the simplification.   

The following conditional
\begin{align}
& \probd (\ret , \ret', \retsimpl, \retsimpl', \lb, \ub, \lb', \ub' \sep  \his_{k+L}, \his'_{k+L}, \nu), \label{eq:GeneratingDistr}
\end{align}
encloses all the variables situated in the problem. Moreover, following Section \ref{subsec:DecomposeRewards}, \eqref{eq:GeneratingDistr} decomposes into
\begin{align}
\probd (\ret, \retsimpl,  \lb, \ub    \sep \his_{k+L},  \nu) \probd (\ret', \retsimpl', \lb', \ub'   \sep \his'_{k+L}, \nu). 
\end{align}
For calculating \pbloss \eqref{eq:pbloss} we will need samples from $\probd (\retsimpl,  \lb, \ub    \sep \his_{k+L},  \nu)$ and $\probd (\retsimpl',  \lb', \ub'    \sep \his'_{k+L},  \nu)$. Let $[\cdot]$ be the Iverson bracket and $\alpha \in [0, 1)$. For every possible sample $(\retsimpl, \retsimpl')$ we do not know which sample $(\ret, \ret')$ one could obtain in the original problem. However, if the bounds are designed such that    
\begin{align}
	\probd (\ret, \lb, \ub \sep  \his_{k+L}, \nu) \quad 	, \quad \probd (\ret', \lb', \ub' \sep  \his'_{k+L}, \nu)
	 \label{eq:RewLbUbCond}
\end{align} 
render
\begin{align}
(1-\alpha) \leq \prob \left(\left[\lb \leq \ret \leq \ub \right] = 1 \sep   \his_{k+L}, \nu\right)   \label{eq:LenientConstraint1}
\end{align}
and  
\begin{align}
(1-\alpha)\leq \prob \left(\left[\lb' \leq \ret' \leq \ub'\right]  = 1 \sep  \his'_{k+L}, \nu \right),  \label{eq:LenientConstraint2}
\end{align}
we can bound online CDF and TDF of \ploss using \pbloss, as we show in Section \ref{sec:CharacterOnline}.

We note that, in general, \eqref{eq:LenientConstraint1} and \eqref{eq:LenientConstraint2} could each have its own $\alpha$. 
All developments and proofs can be adjusted easily to this setting.

Considering the above,  \pbloss \eqref{eq:pbloss} is based on 
\begin{align}
\probd (\retsimpl, \lb, \ub, \retsimpl', \lb' , \ub'  \sep  b_k, \pi, \pi', \nu). \label{eq:GeneratingOnlineMarginal}
\end{align}
To summarize, there are three types of online reward bounds:
\begin{enumerate}
	\item Deterministic bounds. These analytical bounds exist in case of a closed form belief update $\psi_{dt}$ and a deterministic operator reward $\rew_{dt}(b)$ from \ref{eq:DetReward}, e.g.,~belief is a Gaussian and the reward is differential entropy. In this case, even in our extended setting $R$ and $B$ remain Dirac functions. 
	\item Stochastic bounds that hold with probability one. These are also analytical bounds. In our extended setting $R$ and $B$ are no longer Dirac functions. However, these bounds hold for any realization of sample approx., as stated around \eqref{eq:Bounds}. 
	\item Stochastic bounds that hold at least with probability $1-\alpha$. They exist only in our extended setting when $R$ and $B$ are not Dirac functions.
\end{enumerate}

\subsection{Characterization of \ploss Online} \label{sec:CharacterOnline}
\begin{algorithm}[t]
	\begin{algorithmic}
		\caption{Online characterization of the simplification}
		\label{alg: the_alg}
		\STATE {\bfseries Input:} Two candidate policies $\pi, \pi'$. Initial belief $b_k$. Samplers from $\probd (\retsimpl, \lb, \ub \sep  \his_{k+L}, \nu)$ and  $\probd (\retsimpl', \lb', \ub' \sep  \his'_{k+L}, \nu)$.
		\STATE Sample $b_k$ or take the initial samples from inference. 
		Obtain $\probd \left(z_{k+1:k+L}, z'_{k+1:k+L} \sep b_k, \pi, \pi'\right)$ and create two belief policy trees.
		\FORALL {sample pairs $(z_{k+1:k+L}, z'_{k+1:k+L})$}
		\STATE Obtain sample $(\retsimpl, \lb, \ub, \retsimpl', \lb' , \ub')$. 
		\STATE Calculate $f_{\bar{\mathcal{L}}}(\retsimpl, \lb, \ub, \retsimpl', \lb' , \ub')$ according to \eqref{eq:pblossExpr}.
		\ENDFOR
		\STATE $\{ f_{\bar{\mathcal{L}}}(\retsimpl, \lb, \ub, \retsimpl', \lb' , \ub')\}$ represents the set of samples of $\bar{\mathcal{L}}$.
		\STATE {\bfseries Output:} $\forall \Delta$ empirically calculated $\prob \left(\bar{\mathcal{L}} > \Delta \sep b_k, \pi, \pi', \nu \right)$ as $\frac{\text{number of samples of }\bar{\mathcal{L}} \text{, satisfying } \bar{\mathcal{L}} > \Delta }{\text{number of all samples of } \bar{\mathcal{L}}}$. 
	\end{algorithmic}
\end{algorithm}

In this section, we show how  \pbloss can be used in an online setting to characterize \ploss (which is unavailable online). In turn, this enables to provide online probabilistic performance guarantees for a considered simplification (represented by operator $\nu$), or to decide if it is adequate given a user-specified criteria.

Specifically, recall \ploss CDF and TDF, i.e.,~probability to suffer loss at most, or greater, than $\Delta \in \mathbb{R}$, respectively, 
\begin{eqnarray}
	\text{\ploss CDF:} && \prob \left(\mathcal{L} \leq \Delta \sep b_k, \pi, \pi', \nu \right) \label{eq:CDF}
	\\
	\text{\ploss TDF:} && \prob \left(\mathcal{L} > \Delta \sep b_k, \pi, \pi', \nu \right). \label{eq:TDF}
\end{eqnarray}
We now aim to bound \ploss CDF \eqref{eq:CDF} from below, and \ploss TDF \eqref{eq:TDF} from above by utilizing \pbloss.

We now consider \ploss TDF and express $\prob \left(\mathcal{L} > \Delta \sep b_k, \pi, \pi', \nu \right)$ as
\begin{align*}
\prob \left(\mathcal{L} > \Delta, \bar{\mathcal{L}} \geq \mathcal{L} | b_k, \pi, \pi', \nu\right)\!+ \! \prob \left(\mathcal{L} > \Delta, \bar{\mathcal{L}} < \mathcal{L} | b_k, \pi, \pi', \nu\right).
\end{align*}
The first term can be written via chain rule as
\begin{align}
\prob \left(\mathcal{L} > \Delta | \bar{\mathcal{L}} \geq \mathcal{L},  b_k, \pi, \pi', \nu\right)\prob \left(\bar{\mathcal{L}} \geq \mathcal{L} | b_k, \pi, \pi', \nu\right).
\end{align}
Performing chain rule similarly also on the second term and recalling 
that  $\prob(\bar{\mathcal{L}} \geq \mathcal{L} \sep  \cdot ) + \prob(\bar{\mathcal{L}} < \mathcal{L} \sep  \cdot )=1$,  allows to express \ploss TDF as
\begin{eqnarray}
	&&\prob \! \left(\mathcal{L} > \Delta \sep b_k, \pi, \pi', \nu \right) \! = \!
	\prob \! \left(\mathcal{L} > \Delta | \bar{\mathcal{L}} \geq \mathcal{L},  b_k, \pi, \pi', \nu\right)\lambda 
	\nonumber \\ 
	&&\ \ \ + \prob \! \left(\mathcal{L} > \Delta | \bar{\mathcal{L}} < \mathcal{L}, b_k, \pi, \pi', \nu\right)(1-\lambda ),
\end{eqnarray}
where 
\begin{equation}
	\lambda \bydef \prob \left(\bar{\mathcal{L}} \geq \mathcal{L} | b_k, \pi, \pi', \nu\right) \equiv \prob \left([\bar{\mathcal{L}} \geq \mathcal{L}]=1 |  b_k, \pi, \pi', \nu \right) \label{eq:blossboundsloss}.
\end{equation}
While $\lambda$ from \eqref{eq:blossboundsloss} is unavailable, we can bound it from below using \eqref{eq:LenientConstraint1} and \eqref{eq:LenientConstraint2} as follows.  
%
\begin{thm}[Probability that bound bounds] \label{thrm:ProbBounBounds}
Fix $\alpha \in \mathbb{R}$.  Assume that \eqref{eq:LenientConstraint1} and \eqref{eq:LenientConstraint2} hold. Then:
\begin{align}
		&\prob \left([\bar{\mathcal{L}} \geq \mathcal{L}]=1 \sep  b_k, \pi, \pi', \nu \right)   \geq (1-\alpha)^2. \label{eq:prbBoundBounds}
\end{align}
\end{thm}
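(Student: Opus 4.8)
The goal is to lower-bound the probability that the online bound $\bar{\mathcal{L}}$ indeed dominates $\mathcal{L}$. The natural strategy is to identify a sufficient event, in terms of the bounds actually bounding the true returns, on which $\bar{\mathcal{L}}\geq\mathcal{L}$ holds deterministically, and then bound the probability of that event using the hypotheses \eqref{eq:LenientConstraint1} and \eqref{eq:LenientConstraint2} together with the independence structure established in Section~\ref{subsec:DecomposeRewards}.

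\textbf{Step 1: A pointwise implication.} First I would show that, for any realization of $(\ret,\ret',\retsimpl,\retsimpl',\lb,\ub,\lb',\ub')$, if both bounding events $\{\lb\leq\ret\leq\ub\}$ and $\{\lb'\leq\ret'\leq\ub'\}$ hold, then $\bar{\mathcal{L}}\geq\mathcal{L}$. This is a case analysis on the sign of $\retsimpl-\retsimpl'$, comparing \eqref{eq:plossExpr} with \eqref{eq:pblossExpr}. In the case $\retsimpl-\retsimpl'>0$ we have $\mathcal{L}=\max\{\ret'-\ret,0\}$ and $\bar{\mathcal{L}}=\max\{\ub'-\lb,0\}$; since $\ret'\leq\ub'$ and $\lb\leq\ret$ give $\ret'-\ret\leq\ub'-\lb$, monotonicity of $\max\{\cdot,0\}$ yields $\mathcal{L}\leq\bar{\mathcal{L}}$. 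The case $\retsimpl-\retsimpl'<0$ is symmetric, and the case $\retsimpl=\retsimpl'$ gives $\bar{\mathcal{L}}=0=\mathcal{L}$. Hence $\{\lb\leq\ret\leq\ub\}\cap\{\lb'\leq\ret'\leq\ub'\}\subseteq\{\bar{\mathcal{L}}\geq\mathcal{L}\}$.

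\textbf{Step 2: From the implication to a probability bound.} By Step~1 and monotonicity of probability,
\begin{align}
\prob\left([\bar{\mathcal{L}}\geq\mathcal{L}]=1\sep b_k,\pi,\pi',\nu\right)\geq\prob\left([\lb\leq\ret\leq\ub]=1,\,[\lb'\leq\ret'\leq\ub']=1\sep b_k,\pi,\pi',\nu\right).\nonumber
\end{align}
Now I would condition on $(\his_{k+L},\his'_{k+L})$ and integrate over the future observations $(z_{k+},z'_{k+})$ against $\probd(z_{k+},z'_{k+}\sep b_k,\pi,\pi')$, as in Section~\ref{subsec:DecomposeRewards}. The key point is the factorization \eqref{eq:JointRewardsDecomposed}: conditioned on $(\his_{k+L},\his'_{k+L},\nu)$, the tuple $(\ret,\retsimpl,\lb,\ub)$ is independent of $(\ret',\retsimpl',\lb',\ub')$. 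Therefore the conditional probability of the intersection factors as the product $\prob([\lb\leq\ret\leq\ub]=1\sep\his_{k+L},\nu)\cdot\prob([\lb'\leq\ret'\leq\ub']=1\sep\his'_{k+L},\nu)$, and each factor is $\geq(1-\alpha)$ by \eqref{eq:LenientConstraint1} and \eqref{eq:LenientConstraint2}. So the conditional product is $\geq(1-\alpha)^2$ for every $(\his_{k+L},\his'_{k+L})$, and taking the expectation over $(z_{k+},z'_{k+})$ preserves the bound, giving \eqref{eq:prbBoundBounds}.

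\textbf{Main obstacle.} The routine part is the case analysis in Step~1; the delicate part is Step~2, making the conditional-independence argument rigorous — specifically, justifying that the event $\{\lb\leq\ret\leq\ub\}$ depends only on the ``primed-free'' block of variables and $\{\lb'\leq\ret'\leq\ub'\}$ only on the primed block, so that their conditional probabilities multiply. This is exactly what the decomposition \eqref{eq:GeneratingDistr} into $\probd(\ret,\retsimpl,\lb,\ub\sep\his_{k+L},\nu)\,\probd(\ret',\retsimpl',\lb',\ub'\sep\his'_{k+L},\nu)$ is designed to supply, so the proof essentially reduces to invoking that decomposition plus the two hypotheses; I would make sure the integration over $(z_{k+},z'_{k+})$ is handled by the tower property rather than waved through.
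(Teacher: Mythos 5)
Your proposal is correct and follows essentially the same route as the paper's proof: the paper likewise reduces the claim to the inclusion of the joint bounding event in $\{\bar{\mathcal{L}}\geq\mathcal{L}\}$ (which it asserts "by definition" where you spell out the case analysis on the sign of $\retsimpl-\retsimpl'$), then marginalizes over $(z_{k+},z'_{k+})$ and factors the two bounding events using their conditional independence given the two histories to obtain $(1-\alpha)^2$. Your Step 1 is in fact slightly more explicit than the paper's, but the argument is the same.
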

Proof: see Appendix~\ref{app:Proofs}. 

Now we show that given the event $\bar{\mathcal{L}} \geq \mathcal{L}$, \ploss TDF is bounded from above by \pbloss TDF.  
\begin{thm}[Conditional TDF Lower bound] $\forall \Delta \in \mathbb{R}$,
\begin{align}		
	\prob \left( \mathcal{L} > \Delta \sep \bar{\mathcal{L}} \geq \mathcal{L} , b_k , \pi, \pi', \nu\right) \! \leq  \!
	\prob \left(\bar{\mathcal{L}} > \Delta \sep \bar{\mathcal{L}} \geq \mathcal{L} , b_k, \pi, \pi', \nu \right). \nonumber
\end{align}
\end{thm}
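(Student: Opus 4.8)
The plan is to show the statement pointwise on the underlying sample space (or: for every realization $(\ret,\ret',\retsimpl,\retsimpl',\lb,\ub,\lb',\ub')$) that the event $\{\mathcal L>\Delta\}\cap\{\bar{\mathcal L}\ge\mathcal L\}$ is contained in the event $\{\bar{\mathcal L}>\Delta\}\cap\{\bar{\mathcal L}\ge\mathcal L\}$, and then divide both probabilities by $\prob(\bar{\mathcal L}\ge\mathcal L\sep b_k,\pi,\pi',\nu)$ (a common positive conditioning factor). The containment itself is immediate: on the event $\bar{\mathcal L}\ge\mathcal L$, if also $\mathcal L>\Delta$ then $\bar{\mathcal L}\ge\mathcal L>\Delta$, so $\bar{\mathcal L}>\Delta$. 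Hence the conditional probability of $\{\mathcal L>\Delta\}$ cannot exceed that of $\{\bar{\mathcal L}>\Delta\}$ under the same conditioning event, which is exactly the claim.

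In more detail, I would first recall the definitions \eqref{eq:plossExpr} of $\mathcal L=f_{\mathcal L}(\ret,\ret',\retsimpl,\retsimpl')$ and \eqref{eq:pblossExpr} of $\bar{\mathcal L}=f_{\bar{\mathcal L}}(\retsimpl,\lb,\ub,\retsimpl',\lb',\ub')$, noting that both are defined by a case split on the \emph{same} sign condition $\retsimpl-\retsimpl'>0$ (resp. $<0$, resp. $=0$). This shared case structure is what makes the comparison meaningful, although it is not even needed for this particular argument — only the ordering $\bar{\mathcal L}\ge\mathcal L$ on the conditioning event is used. Then I would write, for arbitrary $\Delta\in\mathbb R$,
\begin{align*}
\{\mathcal L>\Delta\}\cap\{\bar{\mathcal L}\ge\mathcal L\}\subseteq\{\bar{\mathcal L}>\mathcal L\ \text{or}\ \bar{\mathcal L}=\mathcal L\}\cap\{\mathcal L>\Delta\}\subseteq\{\bar{\mathcal L}>\Delta\},
\end{align*}
and combine this with the trivial inclusion into $\{\bar{\mathcal L}\ge\mathcal L\}$ to get
\begin{align*}
\{\mathcal L>\Delta\}\cap\{\bar{\mathcal L}\ge\mathcal L\}\subseteq\{\bar{\mathcal L}>\Delta\}\cap\{\bar{\mathcal L}\ge\mathcal L\}.
\end{align*}
Taking probabilities conditioned on $(b_k,\pi,\pi',\nu)$ preserves the inequality, and then dividing by $\prob(\bar{\mathcal L}\ge\mathcal L\sep b_k,\pi,\pi',\nu)$ — which, by Theorem~\ref{thrm:ProbBounBounds}, is at least $(1-\alpha)^2>0$ and hence nonzero — yields the conditional-probability form stated in the theorem.

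The main subtlety, rather than obstacle, is just bookkeeping: making sure the conditioning event $\{\bar{\mathcal L}\ge\mathcal L\}$ is well defined as a measurable event (it is, being $\{[\bar{\mathcal L}\ge\mathcal L]=1\}$ in the notation of \eqref{eq:blossboundsloss}) and that it has positive probability so the conditional probabilities are meaningful — both guaranteed by Theorem~\ref{thrm:ProbBounBounds}. If one wanted the bound to hold conditionally on every fixed pair of future realizations $(\his_{k+L},\his'_{k+L})$ as well, the same pointwise containment argument goes through verbatim, since the event inclusion is deterministic and does not interact with the conditioning. I do not anticipate any real difficulty; the content is entirely in the elementary event-inclusion step, and everything else is monotonicity of (conditional) probability.
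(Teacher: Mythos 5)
Your proposal is correct and follows essentially the same route as the paper: both rest on the pointwise implication that on the event $\bar{\mathcal{L}} \geq \mathcal{L}$, $\mathcal{L} > \Delta$ forces $\bar{\mathcal{L}} \geq \mathcal{L} > \Delta$, giving the event inclusion $\{\mathcal{L} > \Delta\} \subseteq \{\bar{\mathcal{L}} > \Delta\}$ on the conditioned sample space, and then monotonicity of probability. Your write-up is if anything slightly more careful than the paper's, since you explicitly note that the conditioning event has positive probability (via Theorem~\ref{thrm:ProbBounBounds} and $\alpha \in [0,1)$), a point the paper's proof leaves implicit.
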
 
Proof: see Appendix~\ref{app:Proofs}.

Finally, we characterize \ploss  as follows. 
\begin{thm} [Upper and Lower bounds]
Denote $\beta(\Delta) \triangleq \min \left\{1, \frac{\prob \left(\bar{\mathcal{L}} > \Delta \sep b_k, \pi, \pi', \nu \right)}{(1-\alpha)^2} +2\alpha - \alpha^2 \right\}$, so
\begin{align}
	\prob \left(\mathcal{L} > \Delta \sep b_k, \pi, \pi', \nu\right)  \leq  \beta(\Delta)
\end{align}
and 
\begin{align}
	&\prob \left(\mathcal{L} \leq \Delta \sep b_k, \pi, \pi', \nu\right)  \geq 1 -  \beta(\Delta) .
\end{align}
\end{thm}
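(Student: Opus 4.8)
The plan is to start from the law-of-total-probability decomposition of \ploss TDF established immediately before Theorem~\ref{thrm:ProbBounBounds}. Writing $\cdot$ for the conditioning set $b_k,\pi,\pi',\nu$ and $\lambda \bydef \prob(\bar{\mathcal{L}}\geq\mathcal{L}\mid\cdot)$, that decomposition reads
$$\prob(\mathcal{L} > \Delta \mid \cdot) = \prob(\mathcal{L} > \Delta \mid \bar{\mathcal{L}}\geq\mathcal{L},\cdot)\,\lambda + \prob(\mathcal{L} > \Delta \mid \bar{\mathcal{L}}<\mathcal{L},\cdot)\,(1-\lambda).$$
I would bound the two summands separately so that only $\prob(\bar{\mathcal{L}} > \Delta\mid\cdot)$ and $\alpha$ remain. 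Note first that $\alpha\in[0,1)$ together with Theorem~\ref{thrm:ProbBounBounds} gives $\lambda\geq(1-\alpha)^2>0$, so every conditioning on the event $\{\bar{\mathcal{L}}\geq\mathcal{L}\}$ below is well defined, and the term multiplied by $(1-\lambda)$ is read as $0$ in the degenerate case $\lambda=1$.

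For the second summand I would simply bound the conditional probability by $1$, leaving at most $1-\lambda$, and then invoke Theorem~\ref{thrm:ProbBounBounds} to obtain $1-\lambda\leq 1-(1-\alpha)^2 = 2\alpha-\alpha^2$. For the first summand I would proceed in four steps: (i) drop the factor $\lambda\leq1$, leaving $\prob(\mathcal{L}>\Delta\mid\bar{\mathcal{L}}\geq\mathcal{L},\cdot)$; (ii) apply the Conditional TDF Lower bound theorem to replace it by $\prob(\bar{\mathcal{L}}>\Delta\mid\bar{\mathcal{L}}\geq\mathcal{L},\cdot)$; (iii) rewrite this conditional, by the definition of conditional probability, as $\prob(\bar{\mathcal{L}}>\Delta,\,\bar{\mathcal{L}}\geq\mathcal{L}\mid\cdot)/\lambda$ and bound the joint event by $\{\bar{\mathcal{L}}>\Delta\}$ alone, giving $\prob(\bar{\mathcal{L}}>\Delta\mid\cdot)/\lambda$; (iv) use $\lambda\geq(1-\alpha)^2$ once more to get $\prob(\bar{\mathcal{L}}>\Delta\mid\cdot)/(1-\alpha)^2$. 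Adding the two bounds yields $\prob(\mathcal{L}>\Delta\mid\cdot)\leq \prob(\bar{\mathcal{L}}>\Delta\mid\cdot)/(1-\alpha)^2 + 2\alpha-\alpha^2$; since the left-hand side is a probability it is also $\leq1$, hence at most the minimum of the two quantities, which is exactly $\beta(\Delta)$. The CDF statement follows by complementation: $\prob(\mathcal{L}\leq\Delta\mid\cdot) = 1-\prob(\mathcal{L}>\Delta\mid\cdot)\geq 1-\beta(\Delta)$.

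The argument uses only Theorem~\ref{thrm:ProbBounBounds}, the Conditional TDF Lower bound theorem, and elementary probability manipulations, so there is no deep obstacle; the care required is purely in the bookkeeping. One must apply $\lambda\leq1$ in the first summand but $\lambda\geq(1-\alpha)^2$ both when clearing the $1/\lambda$ and when bounding $1-\lambda$, never reversing an inequality; one must confirm the conditioning event has positive probability, which is precisely what Theorem~\ref{thrm:ProbBounBounds} supplies; and one should note that the outer $\min\{1,\cdot\}$ in $\beta(\Delta)$ is what keeps the upper bound a valid probability, while dually $1-\beta(\Delta)$ may be negative and thus vacuous for small $\Delta$ --- still a correct (if uninformative) lower bound there.
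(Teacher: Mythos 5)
Your proof is correct and follows essentially the same route as the paper: the same law-of-total-probability decomposition, bounding the $\bar{\mathcal{L}}<\mathcal{L}$ branch by $1-\lambda\leq 2\alpha-\alpha^2$ via Theorem~\ref{thrm:ProbBounBounds}, and controlling the other branch through the Conditional TDF Lower bound followed by $\prob(\bar{\mathcal{L}}>\Delta\mid\bar{\mathcal{L}}\geq\mathcal{L},\cdot)\leq\prob(\bar{\mathcal{L}}>\Delta\mid\cdot)/(1-\alpha)^2$ (the paper packages this last step with an auxiliary constant $c$ and the analogous decomposition of the \pbloss TDF, but the inequality is identical). Your added remarks on the positivity of $\lambda$ and the possible vacuity of the lower bound for small $\Delta$ are correct refinements the paper leaves implicit.
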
	
Proof: see Appendix~\ref{app:Proofs}.

We can say that a simplification procedure, in a worst case scenario,  will render decision making sub optimal  at most $\Delta$ with probability at least $1-\beta(\Delta)$. Moreover, since  $0 \leq \mathcal{L}$, 
setting $\Delta=0$ in Alg.~\ref{alg: the_alg} we can assess the probability to be absolute action consistent in worst case scenario (for any $\varphi$). 

\subsection{Calculating \ploss Offline and \pbloss Online}
We discuss the offline calculation of the \ploss in Appendix~\ref{app:Calc_ploss_offline}.

So far, we did not explain how to calculate \pbloss \eqref{eq:pbloss}.  One approach is to sample $(\retsimpl, \lb, \ub, \retsimpl', \lb' , \ub')$ from \eqref{eq:GeneratingOnlineMarginal} and evaluate $\bar{\mathcal{L}}$ for each such sample via \eqref{eq:pblossExpr}. Then, \pbloss is represented by $\{ f_{\bar{\mathcal{L}}}(\retsimpl, \lb, \ub, \retsimpl', \lb' , \ub')\}$.

Generating samples from \eqref{eq:GeneratingOnlineMarginal} involves marginalizing over future measurements $z_{k+}\equiv z_{k+1:k+L}$  and $z'_{k+}\equiv z'_{k+1:k+L}$. Similar to \eqref{eq:JointRewardsDecomposed}, $\probd (\retsimpl, \lb, \ub, \retsimpl', \lb' , \ub'   \sep b_k, \pi, \pi', \nu)$ decomposes to
\begin{eqnarray}
	\int_{{\substack{z_{k+} \\ z'_{k+} }}}  \probd ( \retsimpl, l,u \sep  \his_{k+L}, \nu) \probd ( \retsimpl', l', u'  \sep  \his'_{k+L}, \nu) \cdot \label{eq:SimplifiedJointRewardsDecomposed}
	\\ 
	\ \ \ \ \probd (z_{k+}, z'_{k+} \sep b_k, \pi, \pi') dz_{k+} dz'_{k+},
	\nonumber
\end{eqnarray}
In practice, $ \probd (z_{k+}, z'_{k+} | b_k, \pi, \pi')$  corresponds to two extended belief policy trees, starting from the same root ($b_k$) and having the same rule for choosing rollouts. The specific way of  obtaining samples  from  
\begin{align}
\probd (\retsimpl, \lb, \ub \sep  \his_{k+L}, \nu) \quad 	, \quad \probd (\retsimpl', \lb', \ub' \sep  \his'_{k+L}, \nu) \label{eq:OnlineMarginals}
\end{align} 
depends on the operator $\nu$. In the next section, we  elaborate on these aspects, considering a specific simplification operator.
\begin{figure}[t] 
	\centering
	\begin{minipage}[t]{0.49\columnwidth}
		\centering 
		\includegraphics[width=\textwidth]{./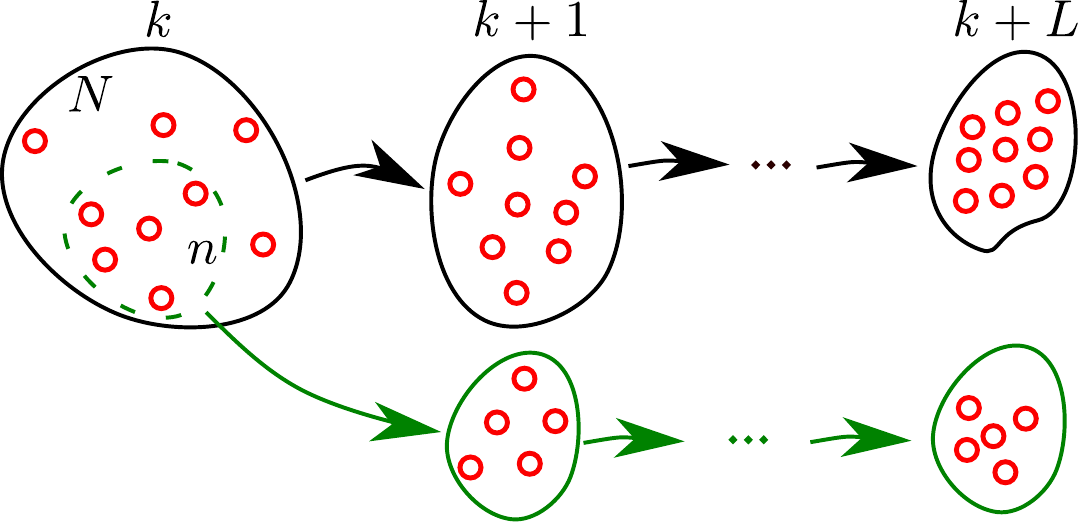}
		\subcaption{}
	\end{minipage}%
	\hfill
	\begin{minipage}[t]{0.49\columnwidth}
		\centering 
		\includegraphics[width=\textwidth]{./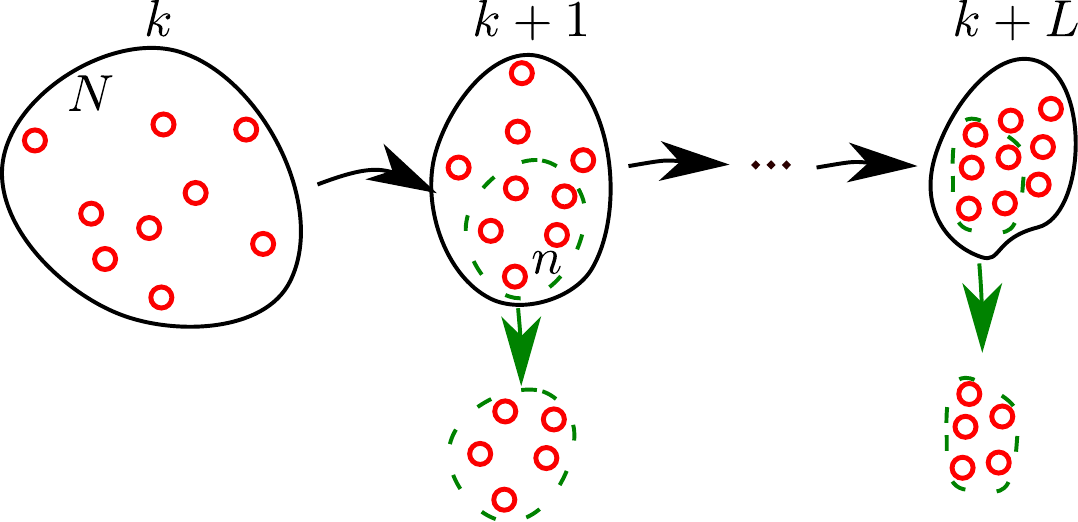}
		\subcaption{}
	\end{minipage} 
	\caption{Potential simplification techniques: \textbf{(a)} Choosing a subset of  samples only at time $k$; \textbf{(b)} Choosing a subset of samples at each time $\ell$.} \label{fig: clouds}
\end{figure}


\section{Specific Simplification} \label{sec: spec}
\subsection{Simplification Technique}

In this section, we exemplify our technique on a specific simplification method. Assume the belief $b_k$ is represented by a set of $N$ weighted samples $\{(w^i_k, x^i_k)\}^N_{i=1}$. Our simplification operator $\nu$ provides a way to choose a subset of $n$ samples from the original $N$ samples. For example, subsampling according to weights. We denote by (a) simplification affecting $b_k$  and producing $\breve{b}_k = \{(w^j_k, x^j_k)\}^n_{j=1}$ ($\nu$ is only applied at time $k$ with fixed seed) \eqref{eq:ConditionalJointTimeReward} and by (b) simplification affecting $b_{\ell}$ and producing $\breve{b}_{\ell} = \{(w^j_{\ell}, x^j_{\ell})\}^n_{j=1}$ for $\ell\in [k+1,k+L]$ \eqref{eq:ConditionalJointTimeReward2}. We take $\psi_{st}$ as an off-the-shelf particle filter, which produces the same number of samples as the input.  The reward operator is approximated by $N$ and $n$ samples in the original and simplified setting, respectively. To make a clear connection to our general framework, in this section, we denote $\rew_{\ell} = r^N_{\ell}$ and  $\rewsimpl_{\ell} = r^n_{\ell}$. The two simplification methods are illustrated in Fig. ~\ref{fig: clouds}.

\subsection{Online Bounds on Sample Based Reward}
To present development for \eqref{eq:LenientConstraint1}, we take inspiration from confidence intervals \cite{wasserman2013all}. Let us introduce the following model
\begin{align}
	\begin{pmatrix} \ret \\  \retsimpl \end{pmatrix} \sep \his_{k+L}, \nu \sim \mathcal{N} 
	\left( \begin{pmatrix} \mu \\ \mu \end{pmatrix} 
	\begin{pmatrix}
		\text{se}^2(N) & \text{cov} \\ \text{cov} & \text{se}^2(n)
	\end{pmatrix} \right), \label{eq:Model}
\end{align}
where $\text{se}$ is the standard error and cov is the covariance. Online we do not have access to these quantities. The standard error depends on the number of samples $N$ and $n$ respectively, dwindling as the number of samples increases.
We assume that each marginal is distributed around the same mean value $\mu$.  
Denote $y=\ret-\retsimpl$. It is known that $y$  is a zero mean  Gaussian with the following variance 
\begin{align}
	\text{var}(y) = \text{se}^2(N) + \text{se}^2(n) -2\text{cov}.
\end{align}
The derivation is in Appendix~\ref{app:BoundsDer}.  
Let ${z = \frac{y}{\sqrt{\text{var}(y)}} \sim \mathcal{N}(0, 1)}$ and
\begin{align}
	z_{\alpha/2} = \Phi^{-1}\left(1-\alpha/2\right),
\end{align}
where $\Phi$ is a Cumulative Distribution Function (CDF) of a standard normal variable  \cite{wasserman2013all} so  $ \probd (z > z_{\alpha/2}) = \alpha/2$ and 
\begin{align}
	\prob \left(-z_{\alpha/2} \leq z \leq z_{\alpha/2} \right)=1-\alpha.
\end{align}
In other words 
\begin{align}
	&\prob \left( |y|  \leq  z_{\alpha/2} \sqrt{\text{var}(y)} \sep \his_{k+L}, \nu\right)= 1-\alpha. \label{eq:ActualConstraint}
\end{align}
Using the facts $\text{se}(N) \leq \text{se}(n)$  and $\text{cov}\leq \text{se}(N)\text{se}(n)$ we obtain that in the case (a) of the simplification ($\text{cov}=0$)
\begin{align}
	\text{var}(y)= \text{se}^2(N) + \text{se}^2(n) \leq 2\text{se}^2(n).
\end{align}
Let us elaborate on why the assumption that $\bsimpl_{k}$ is given alongside $b_k$ and $\nu$ (fixing the seed of subsampling operator) nullifies the covariance between the returns. According to \eqref{eq:ConditionalJointTimeReward} the only source of correlation between returns is \eqref{eq:Stochnu} at time $k$. By fixing the seed, we made \eqref{eq:Stochnu} a  Dirac function. Therefore, conditioning on $b_k$ and $\nu$ is equivalent to conditioning on $\bsimpl_{k}$. 

In  case  of  simplification (b), 
\begin{align}
	\text{var}(y)= \text{se}^2(N) + \text{se}^2(n)  -2\text{cov}\leq 4\text{se}^2(n).
\end{align}
Thus, from \eqref{eq:ActualConstraint} we obtain for both simplification possibilities 
\begin{align}
 &\text{(a) }  \prob \left(|\ret - \retsimpl|  \leq z_{\alpha/2} \sqrt{2}\text{se}(n) \sep \his_{k+L}, \nu\right) \geq 1-\alpha. \label{eq:Simpl_a} \\
 &\text{(b) }  \prob \left( |\ret  -  \retsimpl |  \leq z_{\alpha/2} 2\text{se}(n) \sep \his_{k+L}, \nu \right) \geq 1-\alpha.\label{eq:Simpl_b}
\end{align}

\subsection{Estimation of the Variance}\label{sec:EstVar}
As we do not have access to $\text{se}(n)$ in \eqref{eq:Simpl_a} and \eqref{eq:Simpl_b}, it has to be estimated. The simplest way to do that is to repeatedly sample simplified returns $m$ times from \eqref{eq:ConditionalJointTimeReward} in case of simplification (a) or from \eqref{eq:ConditionalJointTimeReward2} in case of simplification (b).   
Note that a possible bias of the particle filter and the estimation of standard error make  \eqref{eq:ActualConstraint} only asymptotically correct. However, when dealing with a sufficient amount of samples $N$ and $n$, these deviations from \eqref{eq:Model} are negligible. Even with repeated re-sampling we will reduce computational complexity, as we analyze in Section \ref{sec: sim}

Moreover, since we recalculate the simplified reward $m$ times, we could improve the final simplified return. However, this is out of the scope of this paper.    
To conclude, the bounds for both simplification methods are
\begin{align} 
	&\text{(a) } \ub= \retsimpl + z_{\alpha/2} \sqrt{2}\hat{\text{se}}_m \quad \lb= \retsimpl  - z_{\alpha/2} \sqrt{2}\hat{\text{se}}_m  \label{eq:ActualBoundsa}\\
	&\text{(b) } \ub= \retsimpl  + z_{\alpha/2} 2\hat{\text{se}}_m \quad \ \ \lb= \retsimpl  - z_{\alpha/2} 2\hat{\text{se}}_m.
\end{align}	
These bounds asymptotically hold with probability at least $1-\alpha$.  
\subsection{Implementation Details and Computational Complexity}
Now we describe steps in calculating \pbloss. First, we need to construct two extended belief policy trees appropriate to the two candidate policies (see Fig.~\ref{fig:ExtendedBeliefTree}). Second, we shall apply the simplification and calculate simplified returns and bounds.  From now let us assume that  $\psi_{st}$ has a low-variance re-sampler \cite{thrun2005probabilistic}. The entire belief update process complexity is $\mathcal{O}(N)$. Since the extended belief tree does not undergo simplification, it is common to the original and simplified problems. Therefore, we discuss it in the Appendix~\ref{app:TreesCompl}. 
 
Now we analyze the speedup in running time as a result of simplification.  As a momentary reward, we take the differential entropy estimator from 
\cite{skoglar2009information}, \cite{boers2010particle}. 
This selection makes the complexity of calculating the momentary reward to be $\mathcal{O}(N^2)$. Note it is customary to choose resampled or weighted belief for reward calculation. The resampled belief has identical weights. However, the effects of this choice are negligible.

For the bounds calculation in case of simplification (a) we need to apply particle filter with $n$ samples \eqref{eq:ConditionalJointTimeReward} and in case of simplification (b) with $N$ \eqref{eq:ConditionalJointTimeReward2})  $L$ times for each return. Since its complexity is linear in the number of samples, the expected speedup is
\begin{align}
	\frac{N^2}{n^2\cdot m}. \label{eq:speedup}
\end{align}
We obtained this speedup in all our simulations.

\section{Results}
\subsection{Illustration via a Toy Example}
We start with  a toy example. Consider an underwater robot going as low as possible into the sea. 
Let robot's state  at time instant $k$  be $x_{k} \in \mathbb{R}$ (altitude under the sea level). The theoretical distribution of $x_k$ is out of  reach. However, it is not Gaussian.  Assume that the robot's belief about its state $x_k$ at time instant $k$ is represented by a set of $N$ $i.i.d$ samples, i.e.,~$b_k= \{x_{k}^i\}^N_{i=1}$, produced by a perfect particle filter characterized by  $\psi_{st}$. Further, consider the theoretical reward function $\rho_{dt}(b)=\mathbb{E}_{x\sim b}[x]$. Maximizing this reward will take the robot as deep into the sea as possible. The exact calculation of this reward is out of reach due to belief's sampled-based representation. 

Therefore, the theoretical reward is replaced by sample mean, i.e.~$\rew_{k+1} = \frac{1}{N}\sum_{i=1}^N x_{k+1}^i$.  

Suppose the robot considers, as simplification, to use only $n< N$ samples, such that 
$\rewsimpl_{k+1}  = \frac{1}{n}\sum_{j=1}^n x_{k+1}^j$. This simplification could be used in two flavors. The first is to subsample $b_k$. The simplified version, $\breve{b}_k$, serves as an input to a particle filter, producing $\breve{b}_{k+1}$ \eqref{eq:ConditionalJointTimeReward}. The second is to subsample $b_{k+1}$ to produce $\breve{b}_{k+1}$ \eqref{eq:ConditionalJointTimeReward2}. In general, $\breve{b}_{k+1}$ would differ in these two cases. 
 As in previous sections, we denote these two genuine different simplification techniques by $(a)$ and $(b)$. We recall that in the case of $(a)$, we assume a fixed seed of the sampler.  

As an example for stochastic bounds that hold with probability one, one could consider the following analytical bounds. 
\begin{align}
	\min_i \{x^i_{k+1}\} \leq \frac{1}{N}\sum_{i=1}^N x_{k+1}^i \leq \max_i \{x^i_{k+1}\}. \label{eq: toy_stochbounds_with_prob1}
\end{align}
We can define lower and upper bounds as $\lb=\min_i \{x^i_{k+1}\}$ and $\ub=\max_i \{x^i_{k+1}\}$. These bounds bound any sample of the original reward $\frac{1}{N}\sum_{i=1}^N x_{k+1}^i$ constructed from a corresponding set of $N$ state samples. In other words, this is an example of bounds that hold with probability one. Of course, these bounds prone to be affected by outliers (one outlier can take $\ub$ extremely far), and far better analytical bounds could be developed. 

Another possibility is to consider the structure of \eqref{eq:RewSimplRewCond} to define more lenient bounds. From the central limit theorem \cite{durrett2019probability},\cite{wasserman2013all} the following holds asymptotically  
\begin{align}
	\rew_{k+1} \sep  \his_{k+1} \! \sim \!  \mathcal{N}\!\left(\!  \mu,\frac{\sigma^2}{N} \!  \right)\!, \ \! \rewsimpl_{k+1} \sep  \his_{k+1}, \nu \!  \sim \!  \mathcal{N}\left(\! \mu,\frac{\sigma^2}{n}\!  \right). \label{eq:ToyCondMarginals}
\end{align}
In the case of simplification $(a)$, all samples besides $b_k$ and $\breve{b}_{k}$ are independent (fixed seed). Therefore \eqref{eq:RewSimplRewCond} is a multiplication of  marginals. We arrive to \eqref{eq:Simpl_a} with $\text{se}(n) = \frac{\sigma^2}{n}$ (similarly  $\text{se}(N) = \frac{\sigma^2}{N}$). Simplification $(b)$ produces \eqref{eq:Simpl_b}. Note that $\sigma$ is unknown and has to be estimated, as  in Section \ref{sec:EstVar}.

\subsection{Autonomous Navigation with Light Beacons} \label{sec: sim}

\begin{figure}[t] 
	\centering
	{\includegraphics[width=\columnwidth]{./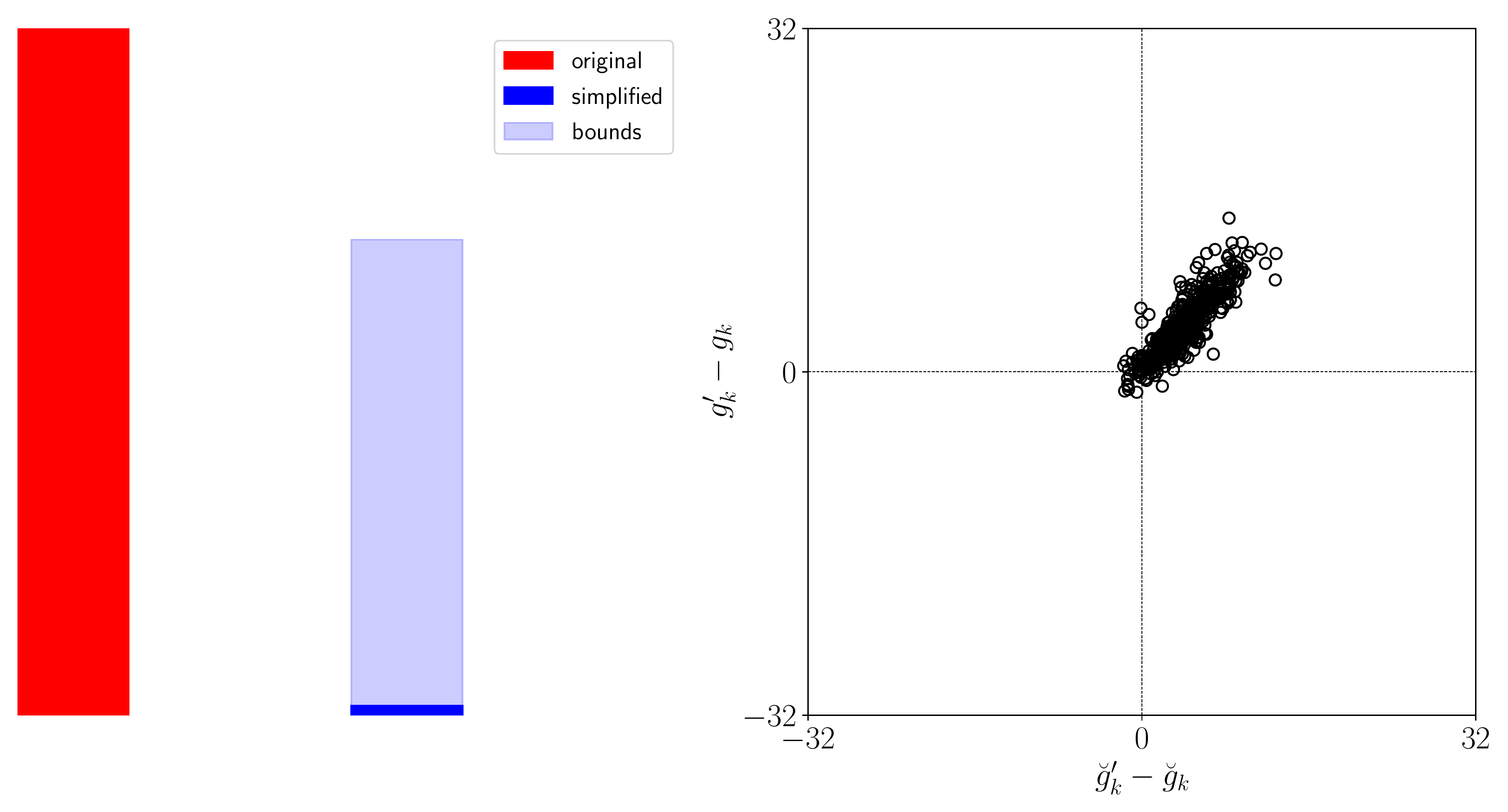}}
	\caption{Results for scenario $1$ - probabilistic action consistency: (left) Demonstration of  runtimes of the total number of the returns for a given extended belief tree where $N=1500$ and $n=175$. Note that this illustration agrees with \eqref{eq:speedup};  (right) action consistency of the samples of the return. }\label{fig: sim_runtimes}
	\label{fig:RunTimesAndDiffs}
\end{figure}
We exemplify our method on the problem of autonomous navigation to a  goal with light beacons, which can be used for localization. In all our simulations in this section, the return $\ret$ is a cumulative reward, and, as a representative projection operator $\varphi$, we chose the expected value. In this study, the simplification is of type (a).
  
For simplicity, assume we have a linear motion model $T$, where $x \in \mathbb{R}^2$ as well as $a \in \mathbb{R}^2$ 
\begin{align}
	x_{k+1} = x_{k} + a_k+w_k \quad w_k \sim \mathcal{N}(0, \Sigma_w),
\end{align}
where $\Sigma_w = w \cdot I$ ($w$ is a given parameter) and action $a_k \in \mathcal{A}$, and where the action space  $\mathcal{A}$ is the space of motion primitives. 
\paragraph{Characterizing Probabilistic Action Consistency}
The observation model $O$ is as follows,  $z \sim \mathcal{N}(x, \Sigma_v(x))$, where the spatially-varying covariance matrix is 
\begin{align}
\Sigma_v(x)= v(x) \cdot I, \quad 	v(x) = w \cdot \min \{1, \|x - x^*\|^2_2\},
\end{align}
where $x^*$ is the location of the light beacon closest to $x$. The noise has a constant variance $w$. Without losing generality, we assume $b_k$ at planning time is uniformly distributed in  a unit square. We set $L=12$ and compare two action sequences: $a_{k+1:k+12}$ is six times $(1, 0)^T$ and after that six times $(0, 1)^T$. In the action sequence $a'_{k+1:k+12}$ we switched the order of actions such that the robot performs six times $(0, 1)^T$ and after that six times $(1, 0)^T$. 

\begin{figure}[t] 
	\centering
	{\includegraphics[width=\columnwidth]{./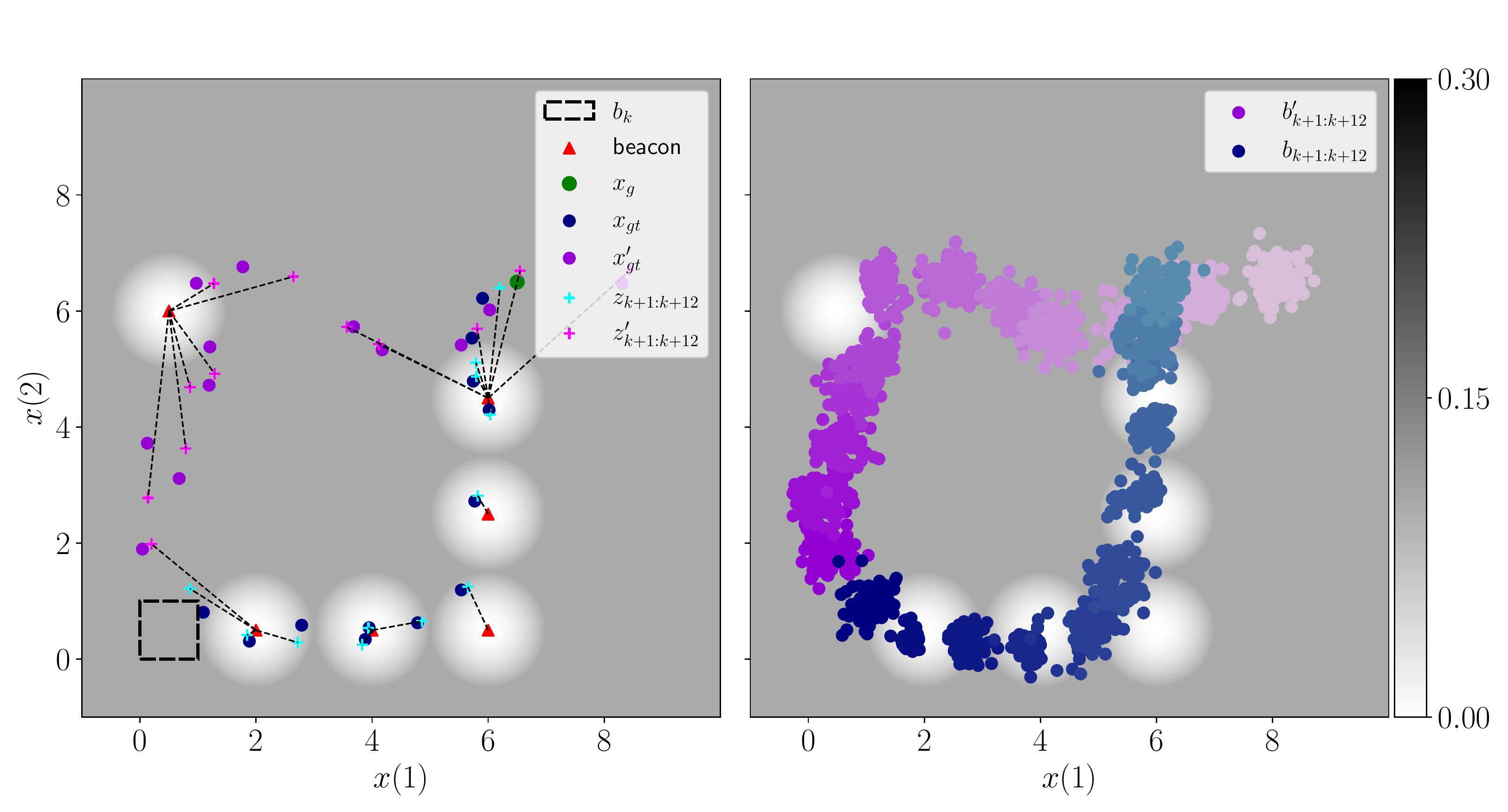}}
	\caption{Results for scenario $1$ - probabilistic action consistency: Illustration of one realization of the future in a simulated scenario considering two possible action sequences. We start from $b_k$ represented by samples uniformly distributed on a unit square. We demonstrated two sequences of observations alongside ground truth state samples, and the closest beacons produced these observations from the left. From the right, we plotted two sequences of the beliefs produced by these two histories. We show $100$ most probable samples of each belief.} \label{fig:SimScen}
\end{figure}
One realization of a possible future in terms of measurements and corresponding posterior beliefs is illustrated in  Fig.~\ref{fig:SimScen}. It is clearly seen that proximity to a beacon improves localization. Note the robot is always able to avoid a dead reckoning scenario as it always gets an observation from the closest beacon. This corresponds to a non-Gaussian noise, promoting the usage of particles-based belief representation.  We hope that this setting conveys a real world scenario where an ambulating robot is  equipped with long and short range sensors. The close range sensors are activated when the robot is inside a unit circle around the beacon. When the robot is outside a unit circle from the closest beacon, the beacon is detectable only by the long range sensors, which are less sensitive.

\begin{table}[t]
	\centering
	\setlength{\extrarowheight}{.5ex}
\begin{tabular}{|c|c|c|c|c|c|}
	\hline 
	$  n $  & \scalebox{0.8}{$  \prob (\mathcal{L} >  0.5 \breve{\Delta}^* \sep \cdot)$}  & $  \beta (0.5 \breve{\Delta}^*)$ & $  \breve{\Delta}^*$  & \scalebox{0.8}{$  \prob(\mathcal{L} > \breve{\Delta}^* \sep \cdot )$}  & $  \beta(\breve{\Delta}^* )$ \\
	\hline 
	$175$ & 0.0 & 0.33 & 4.14 & 0.0 & 0.11 \\
	\hline 
	$150$ & 0.01 & 0.43 & 4.04 & 0.0 & 0.17 \\
	\hline 
	$125$ & 0.01 & 0.43 & 4.21 & 0.0 & 0.2 \\
	\hline 
	$100$ & 0.0 & 0.56 & 4.08 & 0.0 & 0.29 \\
	\hline 
	$75$ & 0.01 & 0.64 & 4.01 & 0.0 & 0.39 \\
	\hline 
	$50$ & 0.02 & 0.83 & 3.72 & 0.01 & 0.63 \\
	\hline 
	$25$ & 0.07 & 1.0 & 3.34 & 0.03 & 0.94 \\
	\hline 
\end{tabular}
	\caption{Results for scenario $1$ - probabilistic action consistency: Online characterization for $N=1500$, $\alpha = 0.01$, $z_{\alpha/2}=2.56$.}
	\label{tbl:suboptimality}
\end{table} 

We present results of simplification $(a)$ for $w=0.1$, $N=1500$, $m=50$, $\alpha = 0.01$, $z_{\alpha/2}=2.56$, and the total number of observations is $500$. For each sample of $z_{k+1:k+L}$, we sampled $b_{k+1:k+L}$ once.   

As we see in the left part of Fig.~\ref{fig: sim_runtimes} we gained speedup as expected \eqref{eq:speedup} for $n=175$. We added measurements of all running times in our simulations to Appendix~\ref{app:addRes}.  

From these samples of the returns and bounds, we build \ploss and \pbloss in Fig.~\ref{fig:plosspblosshist}. In the right part of Fig.~\ref{fig: sim_runtimes} quadrants \Romannum{2} and \Romannum{4}, we observe samples that are not action consistent. To assess performance we need to choose some representative $\Delta$.
Since online we have access exclusively to the simplified problem, let us choose $\breve{\Delta}^* = \sep \mathbb{E}[\retsimpl \sep b_k, \pi, \nu]- \mathbb{E}[\retsimpl' \sep b_k, \pi', \nu]|$ and $\Delta = 0.5\breve{\Delta}^*$.  Table~\ref{tbl:suboptimality} quantifies online characterization against offline \ploss TDF.  

In Fig.~\ref{fig:characteristics} we focused on $n=175$;  \emph{online} we can conclude that probability that loss incurred by this simplification will be greater than $\breve{\Delta}^*$  is at most $0.11$, while actual $ \prob(\mathcal{L} > \breve{\Delta}^* \sep \cdot )$ is $0.0$. Similarly, the probability for loss incurred by this simplification to be greater than $0.5\breve{\Delta}^*$  is at most $0.33$, while actual $ \prob(\mathcal{L} > 0.5\breve{\Delta}^* \sep \cdot )$ is $0.0$. In this scenario, the simplification is not absolute action consistent; it means variability described by  \eqref{eq:Model} is sufficient to switch the order of the returns and incur loss $\Delta$ at some sampled realization.  

Furthermore, our bounds depend on variance ($\text{se}^2(n)$) of the sample approximation of the reward \eqref{eq:ActualBoundsa}, which, according to \eqref{eq:Model} does not depend on $\Delta$. Hence, as $\Delta$ decreases towards zero, the contribution of variance versus the difference between simplified returns grows for  any realization of $\bar{\mathcal{L}}$.  
Therefore, \pbloss departs from \ploss as $\Delta$ decreases. We observe this behavior in Fig.~\ref{fig:characteristics}. Moreover, with the diminishing number of samples, this effect is amplified, as demonstrated in Fig.~\ref{fig:Scenario1Study}, due to growing variance \eqref{eq:Model}.  Remarkably,  when samples of original returns are more distinct, the effect of variance is nullified. In such a setting, our characterization is incredibly precise, see Fig.~\ref{fig:Scenario2Study}. 

Thus, the behavior of the \pbloss is more conservative in more delicate scenarios, where  two candidate policies are close to each other in terms of returns. Importantly, for significantly different policies, \pbloss becomes tighter to \ploss.
This brings us to the next section.  

\begin{figure}[t] 
	\centering
	{\includegraphics[width=\columnwidth]{./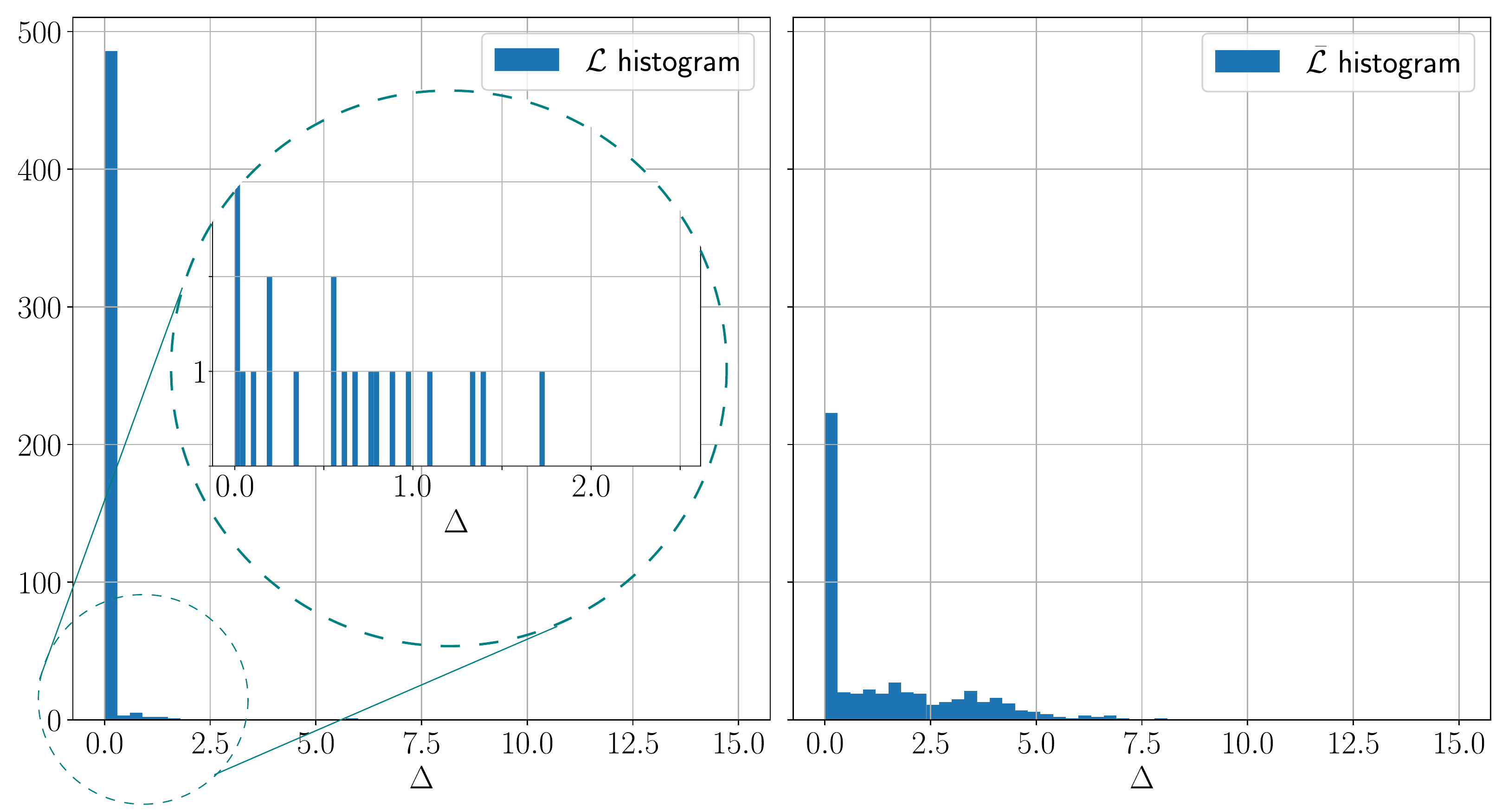}}
	\caption{Results for scenario $1$ - probabilistic action consistency: Histograms of \ploss and \pbloss for for $N=1500$, $n=175$, $\alpha = 0.01$, $z_{\alpha/2}=2.56$  (bin width is $0.3$, in zoom-in, bin width is $0.03$). }\label{fig:plosspblosshist}
\end{figure}

\begin{figure}[t] 
	\centering
	{\includegraphics[width=\columnwidth]{./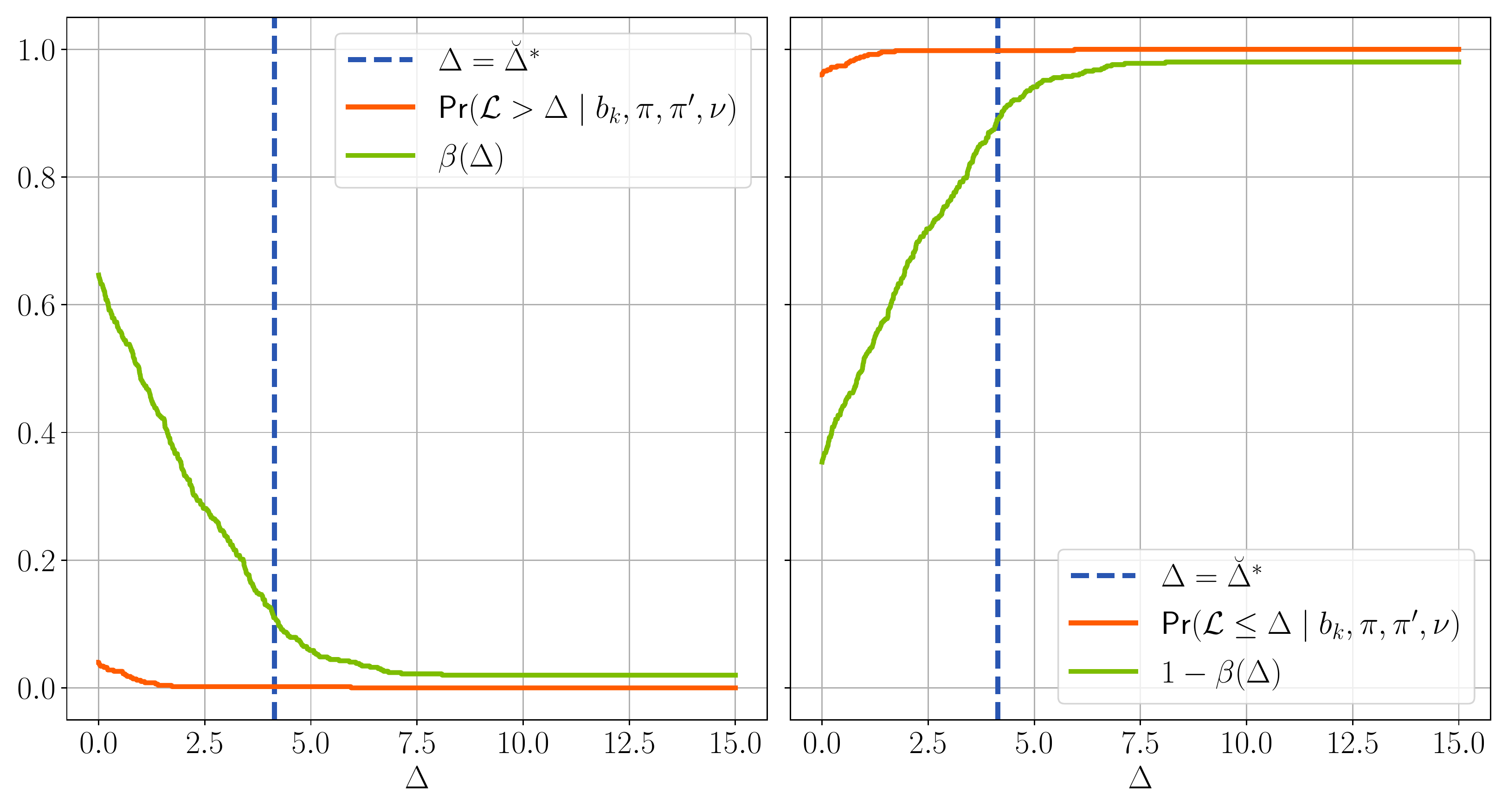}}
	\caption{Results for scenario $1$ - probabilistic action consistency: Empirical characterization for $N=1500$, $n=175$, $\alpha = 0.01$, $z_{\alpha/2}=2.56$, evaluated in a grid with intervals  $0.001$. }  \label{fig:characteristics}
\end{figure}

\paragraph{Revealing Empirical Absolute Action Consistency}
In this scenario we modified the noise in the observation model as such $v(x) = w \cdot  \|x - x^*\|^2_2$. 
In addition we removed one beacon on the way of the second action sequence. We remain with $w=0.1$, $m=50$, $\alpha = 0.01$, $z_{\alpha/2}=2.56$ and set $N=1000$.  In this scenario the  returns of two action sequences are much more distant. The samples in the right segment of Fig.~\ref{fig:RunTimesAndDiffsAbs} are more distant from the origin than in Fig.~\ref{fig:RunTimesAndDiffs}.  The characterization is shown in  Table~\ref{tbl:suboptimality_abs}. Therefore, the simplification is empirically absolute action consistent. As we see from the Table~\ref{tbl:suboptimality_abs}, observing $\beta(\Delta=0.0)$ we are able to identify online that for $n=100$ and $n=75$, probability to receive samples of the returns violating action consistency is at most $0.03$, while  $  \prob(\mathcal{L} > 0.0 \sep \cdot)$  is $0.0$.   
\begin{figure}[t] 
	\centering
	{\includegraphics[width=0.5\textwidth]{./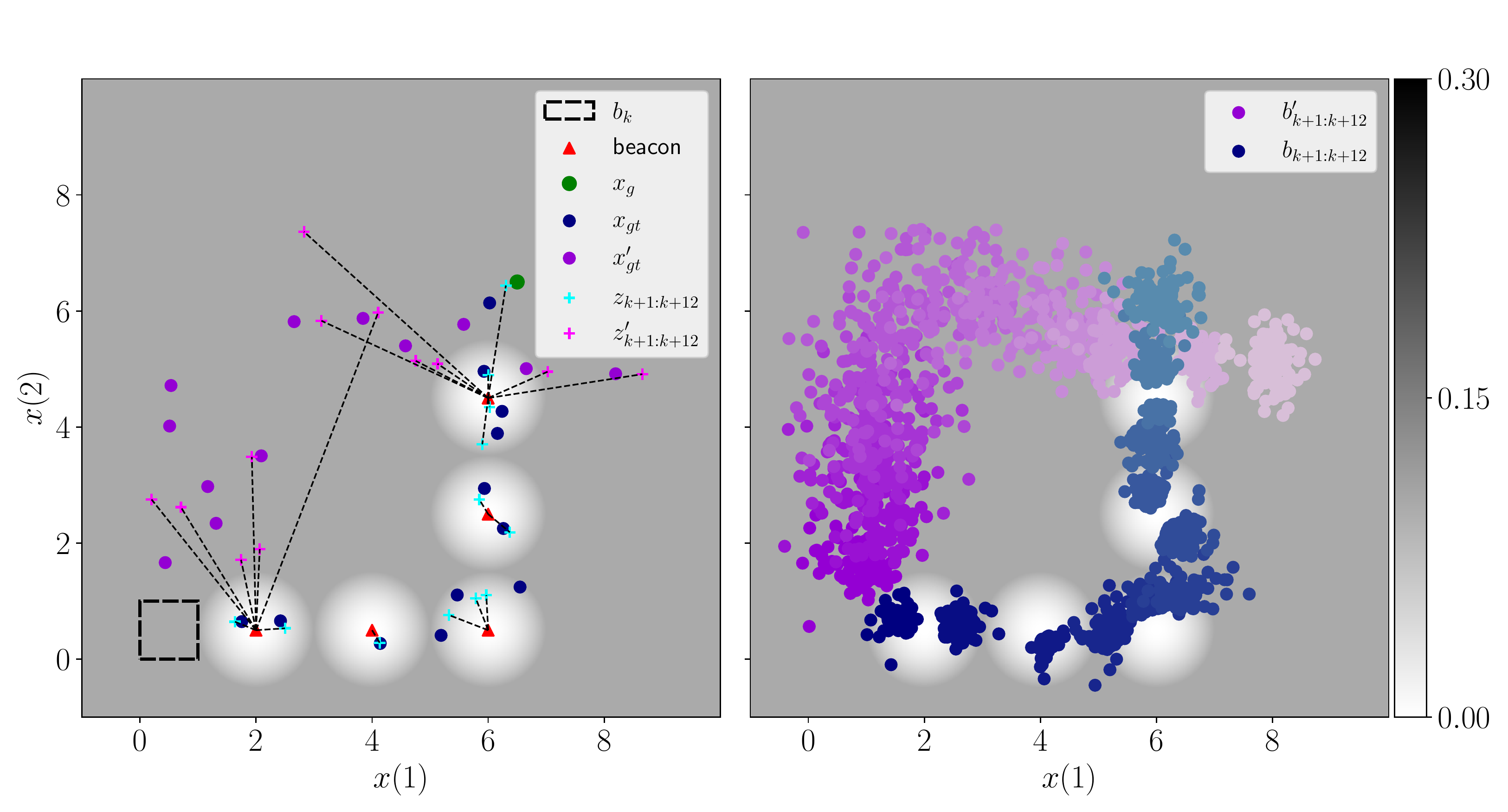}}
	\caption{Results for scenario $2$ - empirical absolute action consistency: Illustration of one realization of the future in a simulated scenario considering two possible action sequences. We start from $b_k$ represented by samples uniformly distributed on a unit square. We demonstrated two sequences of observations alongside ground truth state samples, and the closest beacons produced these observations from the left. From the right, we plotted two sequences of the beliefs produced by these two histories. We show $100$ most probable samples of each belief.} \label{fig:SimScen2}
\end{figure}

\begin{table}[t]
	\centering
	\setlength{\extrarowheight}{.5ex}
		\begin{tabular}{|c|c|c|c|c|c|}
	\hline 
	$  n $  & \scalebox{0.8}{$  \prob(\mathcal{L} > 0.0 \sep \cdot)$}  & $  \beta (\Delta=0.0)$ & $  \breve{\Delta}^*$  & \scalebox{0.8}{$  \prob(\mathcal{L} > \breve{\Delta}^* \sep \cdot )$ } & $  \beta(\breve{\Delta}^* )$ \\
	\hline 
	$100$ & 0.0 & 0.03 & 17.54 & 0.0 & 0.02 \\
	\hline 
	$75$ & 0.0 & 0.03 & 17.14 & 0.0 & 0.02 \\
	\hline 
	$50$ & 0.0 & 0.06 & 16.65 & 0.0 & 0.02 \\
	\hline 
	$25$ & 0.0 & 0.19 & 15.27 & 0.0 & 0.02 \\
	\hline 
\end{tabular}
	\caption{Results for scenario $2$ - empirical absolute action consistency: Online characterization for $N=1000$, $\alpha = 0.01$, $z_{\alpha/2}=2.56$.}
	\label{tbl:suboptimality_abs}
\end{table} 
\begin{figure}[t] 
	\centering
	{\includegraphics[width=0.5\textwidth]{./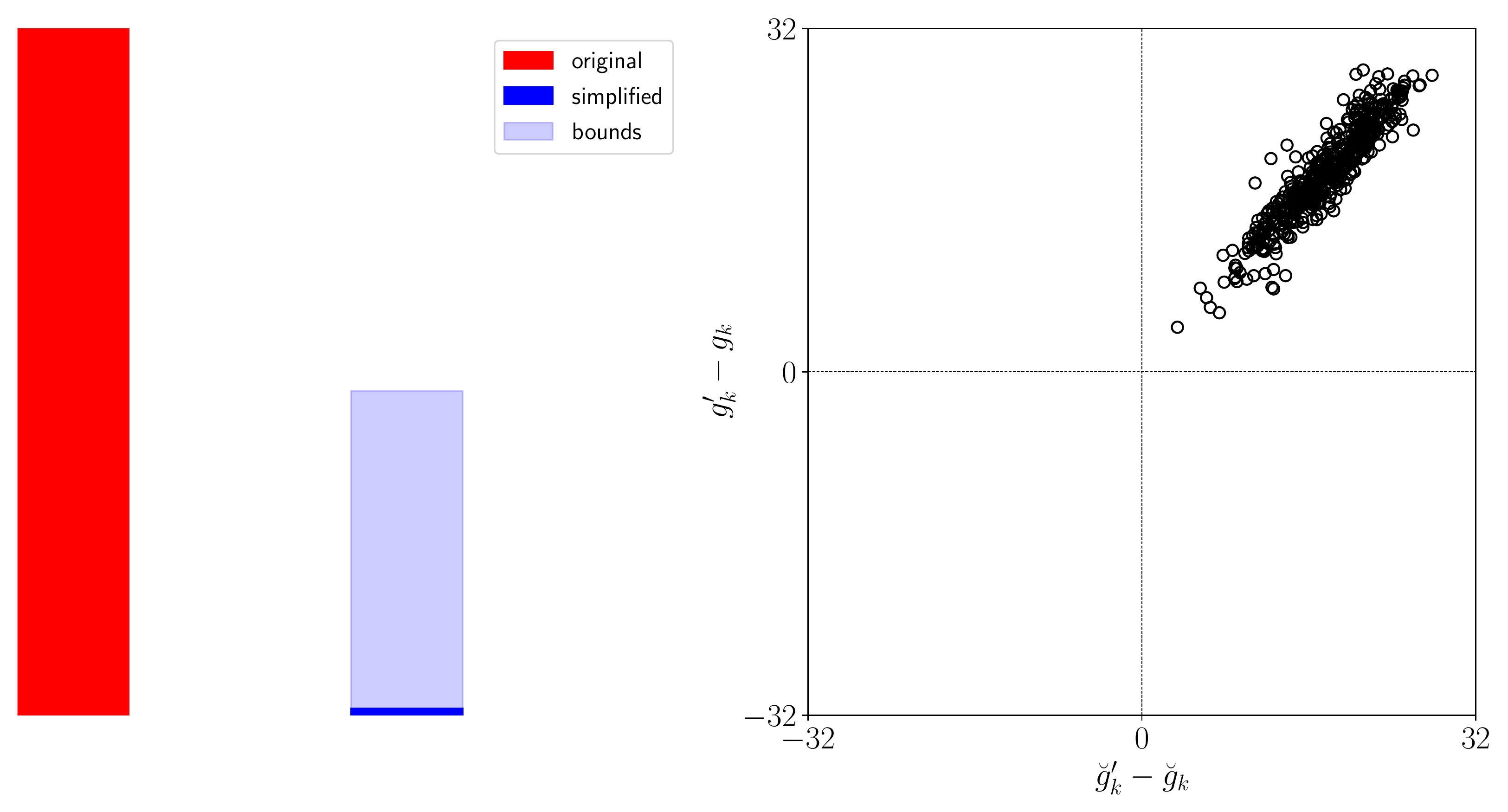}}
	\caption{Results for scenario $2$ - empirical absolute action consistency: (left) Demonstration of  runtimes of the total number of the returns for a given extended belief tree where $N=1000$ and $n=100$. Note that this illustration agrees with \eqref{eq:speedup};  (right) action consistency of the samples of the return.} 
	\label{fig:RunTimesAndDiffsAbs}
\end{figure}
We placed additional results and discussions in Appendix~\ref{app:addRes}.   

\section{Conclusion} 
\label{sec:conclusion}
We extended $\rho$-POMDP to \extendedpomdp and introduced a framework for quantifying online the effect of simplification, alongside novel stochastic bounds on the return. Our bounds take advantage of the information encoded in the joint distribution of the original and simplified return. The proposed general framework is applicable to any bounds on the return to capture simplification outcomes. We presented experiments that confirmed the benefits of our approach. Our future research will focus on incorporating the contributed framework within simplified online decision making with probabilistic guarantees.


\section*{Acknowledgments}

This research was  supported by the Israel Science Foundation (ISF),  by the Israel Ministry of Science \& Technology (MOST), and by 
a donation from the Zuckerman Fund to the Technion Center for Machine Learning and Intelligent Systems (MLIS).


\bibliographystyle{plainnat}
\bibliography{references}

\begin{thebibliography}{37}
\providecommand{\natexlab}[1]{#1}
\providecommand{\url}[1]{\texttt{#1}}
\expandafter\ifx\csname urlstyle\endcsname\relax
  \providecommand{\doi}[1]{doi: #1}\else
  \providecommand{\doi}{doi: \begingroup \urlstyle{rm}\Url}\fi

\bibitem[Araya et~al.(2010)Araya, Buffet, Thomas, and
  Charpillet]{araya2010pomdp}
Mauricio Araya, Olivier Buffet, Vincent Thomas, and Fran{\c{c}}cois Charpillet.
\newblock A pomdp extension with belief-dependent rewards.
\newblock \emph{Advances in neural information processing systems},
  23:\penalty0 64--72, 2010.

\bibitem[Bai et~al.(2014)Bai, Hsu, and Lee]{bai2014integrated}
Haoyu Bai, David Hsu, and Wee~Sun Lee.
\newblock Integrated perception and planning in the continuous space: A pomdp
  approach.
\newblock \emph{The International Journal of Robotics Research}, 33\penalty0
  (9):\penalty0 1288--1302, 2014.

\bibitem[Bertsekas(2017)]{bertsekas2017dynamic}
Dimitri~P. Bertsekas.
\newblock \emph{Dynamic programming and optimal control. 4ed}, volume~1.
\newblock Athena scientific Belmont, MA, 2017.

\bibitem[Boers et~al.(2010)Boers, Driessen, Bagchi, and
  Mandal]{boers2010particle}
Yvo Boers, Hans Driessen, Arunabha Bagchi, and Pranab Mandal.
\newblock Particle filter based entropy.
\newblock In \emph{2010 13th International Conference on Information Fusion},
  pages 1--8. IEEE, 2010.

\bibitem[Bouakiz and Kebir(1995)]{bouakiz1995target}
Mokrane Bouakiz and Youcef Kebir.
\newblock Target-level criterion in markov decision processes.
\newblock \emph{Journal of Optimization Theory and Applications}, 86\penalty0
  (1):\penalty0 1--15, 1995.

\bibitem[Defourny et~al.(2008)Defourny, Ernst, and Wehenkel]{Defourny08ws}
Boris Defourny, Damien Ernst, and Louis Wehenkel.
\newblock Risk-aware decision making and dynamic programming.
\newblock In \emph{NIPS Workshop on Model Uncertainty and Risk in RL}, 2008.

\bibitem[Dressel and Kochenderfer(2017)]{dressel2017efficient}
Louis Dressel and Mykel~J Kochenderfer.
\newblock Efficient decision-theoretic target localization.
\newblock In \emph{Twenty-Seventh International Conference on Automated
  Planning and Scheduling}, 2017.

\bibitem[Durrett(2019)]{durrett2019probability}
Rick Durrett.
\newblock \emph{Probability: theory and examples}, volume~49.
\newblock Cambridge university press, 2019.

\bibitem[Elimelech and Indelman(2019)]{elimelech2019efficient}
Khen Elimelech and Vadim Indelman.
\newblock Simplified decision making in the belief space using belief
  sparsification.
\newblock \emph{arXiv preprint arXiv:1909.00885}, 2019.

\bibitem[Elimelech and Indelman(2020)]{elimelech2020fast}
Khen Elimelech and Vadim Indelman.
\newblock Fast action elimination for efficient decision making and belief
  space planning using bounded approximations.
\newblock In \emph{Robotics Research}, pages 843--858. Springer, 2020.

\bibitem[Farhi and Indelman(2019)]{farhi2019ix}
Elad~I Farhi and Vadim Indelman.
\newblock ix-bsp: Belief space planning through incremental expectation.
\newblock In \emph{2019 International Conference on Robotics and Automation
  (ICRA)}, pages 7180--7186. IEEE, 2019.

\bibitem[Farhi and Indelman(2021)]{farhi2021ixBSP}
Elad~I. Farhi and Vadim Indelman.
\newblock ix-bsp: Incremental belief space planning.
\newblock \emph{arXiv preprint arXiv:2102.09539}, 2021.

\bibitem[Fehr et~al.(2018)Fehr, Buffet, Thomas, and Dibangoye]{fehr2018rho}
Mathieu Fehr, Olivier Buffet, Vincent Thomas, and Jilles Dibangoye.
\newblock rho-pomdps have lipschitz-continuous epsilon-optimal value functions.
\newblock \emph{Advances in neural information processing systems},
  31:\penalty0 6933--6943, 2018.

\bibitem[Fischer and Tas(2020)]{fischer2020information}
Johannes Fischer and {\"O}mer~Sahin Tas.
\newblock Information particle filter tree: An online algorithm for pomdps with
  belief-based rewards on continuous domains.
\newblock In \emph{International Conference on Machine Learning}, pages
  3177--3187. PMLR, 2020.

\bibitem[Garg et~al.(2019)Garg, Hsu, and Lee]{garg2019despot}
Neha~Priyadarshini Garg, David Hsu, and Wee~Sun Lee.
\newblock Despot-alpha: Online pomdp planning with large state and observation
  spaces.
\newblock In \emph{Robotics: Science and Systems}, 2019.

\bibitem[Indelman(2016)]{indelman2016no}
Vadim Indelman.
\newblock No correlations involved: Decision making under uncertainty in a
  conservative sparse information space.
\newblock \emph{IEEE Robotics and Automation Letters}, 1\penalty0 (1):\penalty0
  407--414, 2016.

\bibitem[Indelman et~al.(2015)Indelman, Carlone, and
  Dellaert]{indelman2015planning}
Vadim Indelman, Luca Carlone, and Frank Dellaert.
\newblock Planning in the continuous domain: A generalized belief space
  approach for autonomous navigation in unknown environments.
\newblock \emph{The International Journal of Robotics Research}, 34\penalty0
  (7):\penalty0 849--882, 2015.

\bibitem[Kaelbling et~al.(1998)Kaelbling, Littman, and
  Cassandra]{Kaelbling98ai}
L.~P. Kaelbling, M.~L. Littman, and A.~R. Cassandra.
\newblock Planning and acting in partially observable stochastic domains.
\newblock \emph{Artificial intelligence}, 101\penalty0 (1):\penalty0 99--134,
  1998.

\bibitem[Kitanov and Indelman(2019)]{kitanov2019topological}
Andrej Kitanov and Vadim Indelman.
\newblock Topological information-theoretic belief space planning with
  optimality guarantees.
\newblock \emph{arXiv preprint arXiv:1903.00927}, 2019.

\bibitem[Knuth(2014)]{knuth2014art}
Donald~E Knuth.
\newblock \emph{Art of computer programming, volume 2: Seminumerical
  algorithms}.
\newblock Addison-Wesley Professional, 2014.

\bibitem[Kochenderfer(2015)]{kochenderfer2015decision}
Mykel~J Kochenderfer.
\newblock \emph{Decision making under uncertainty: theory and application}.
\newblock MIT press, 2015.

\bibitem[Kocsis and Szepesv{\'a}ri(2006)]{kocsis2006bandit}
Levente Kocsis and Csaba Szepesv{\'a}ri.
\newblock Bandit based monte-carlo planning.
\newblock In \emph{European conference on machine learning}, pages 282--293.
  Springer, 2006.

\bibitem[Kurniawati and Yadav(2016)]{kurniawati2016online}
Hanna Kurniawati and Vinay Yadav.
\newblock An online pomdp solver for uncertainty planning in dynamic
  environment.
\newblock In \emph{Robotics Research}, pages 611--629. Springer, 2016.

\bibitem[Kurniawati et~al.(2008)Kurniawati, Hsu, and Lee]{kurniawati2008sarsop}
Hanna Kurniawati, David Hsu, and Wee~Sun Lee.
\newblock Sarsop: Efficient point-based pomdp planning by approximating
  optimally reachable belief spaces.
\newblock In \emph{Robotics: Science and systems}, volume 2008. Zurich,
  Switzerland., 2008.

\bibitem[Papadimitriou and Tsitsiklis(1987)]{papadimitriou1987complexity}
Christos~H Papadimitriou and John~N Tsitsiklis.
\newblock The complexity of markov decision processes.
\newblock \emph{Mathematics of operations research}, 12\penalty0 (3):\penalty0
  441--450, 1987.

\bibitem[Rockafellar et~al.(2000)Rockafellar, Uryasev, et~al.]{Rockafellar00jr}
R~Tyrrell Rockafellar, Stanislav Uryasev, et~al.
\newblock Optimization of conditional value-at-risk.
\newblock \emph{Journal of risk}, 2:\penalty0 21--42, 2000.

\bibitem[Ryan(2008)]{ryan2008information}
Allison Ryan.
\newblock Information-theoretic tracking control based on particle filter
  estimate.
\newblock In \emph{AIAA Guidance, Navigation and Control Conference and
  Exhibit}, page 6307, 2008.

\bibitem[Silver and Veness(2010)]{silver2010monte}
David Silver and Joel Veness.
\newblock Monte-carlo planning in large pomdps.
\newblock In \emph{Advances in neural information processing systems}, pages
  2164--2172, 2010.

\bibitem[Skoglar et~al.(2009)Skoglar, Orguner, and
  Gustafsson]{skoglar2009information}
Per Skoglar, Umut Orguner, and Fredrik Gustafsson.
\newblock On information measures based on particle mixture for optimal
  bearings-only tracking.
\newblock In \emph{2009 IEEE Aerospace conference}, pages 1--14. IEEE, 2009.

\bibitem[Somani et~al.(2013)Somani, Ye, Hsu, and Lee]{somani2013despot}
Adhiraj Somani, Nan Ye, David Hsu, and Wee~Sun Lee.
\newblock Despot: Online pomdp planning with regularization.
\newblock \emph{Advances in neural information processing systems},
  26:\penalty0 1772--1780, 2013.

\bibitem[Spaan et~al.(2015)Spaan, Veiga, and Lima]{spaan2015decision}
Matthijs~TJ Spaan, Tiago~S Veiga, and Pedro~U Lima.
\newblock Decision-theoretic planning under uncertainty with information
  rewards for active cooperative perception.
\newblock \emph{Autonomous Agents and Multi-Agent Systems}, 29\penalty0
  (6):\penalty0 1157--1185, 2015.

\bibitem[Sunberg and Kochenderfer(2017)]{sunberg2017online}
Zachary Sunberg and Mykel Kochenderfer.
\newblock Online algorithms for pomdps with continuous state, action, and
  observation spaces.
\newblock \emph{arXiv preprint arXiv:1709.06196}, 2017.

\bibitem[Sutton and Barto(2018)]{sutton2018reinforcement}
Richard~S Sutton and Andrew~G Barto.
\newblock \emph{Reinforcement learning: An introduction}.
\newblock MIT press, 2018.

\bibitem[Thrun et~al.(2005)Thrun, Burgard, and Fox]{thrun2005probabilistic}
Sebastian Thrun, Wolfram Burgard, and Dieter Fox.
\newblock \emph{Probabilistic robotics}.
\newblock MIT press, 2005.

\bibitem[Wasserman(2013)]{wasserman2013all}
Larry Wasserman.
\newblock \emph{All of statistics: a concise course in statistical inference}.
\newblock Springer Science \& Business Media, 2013.

\bibitem[Wu and Lin(1999)]{wu1999minimizing}
Congbin Wu and Yuanlie Lin.
\newblock Minimizing risk models in markov decision processes with policies
  depending on target values.
\newblock \emph{Journal of mathematical analysis and applications},
  231\penalty0 (1):\penalty0 47--67, 1999.

\bibitem[Ye et~al.(2017)Ye, Somani, Hsu, and Lee]{Ye17jair}
Nan Ye, Adhiraj Somani, David Hsu, and Wee~Sun Lee.
\newblock Despot: Online pomdp planning with regularization.
\newblock \emph{JAIR}, 58:\penalty0 231--266, 2017.

\end{thebibliography}
\begin{appendices}
\onecolumn
\newpage

\section{Discussion on Belief-MDP} \label{app:DiscussionBeliefMDP}
In belief-MDP 
\begin{align}
&\probd \left(b_{\ell} \sep b_{\ell-1}, \pi_{\ell-1}(b_{\ell-1})\right)= \int_{z_{\ell}} \probd(b_{\ell} \sep b_{\ell-1}, \pi_{\ell-1}(b_{\ell-1}), z_{\ell})\probd(z_{\ell} \sep b_{\ell-1}, \pi_{\ell-1}(b_{\ell-1})),
\end{align}
where the usual assumption is that $\probd(b_{\ell} \sep b_{\ell-1}, \pi_{\ell-1}(b_{\ell-1}), z_{\ell})$ is Dirac's delta function. On the contrary, in our extended setting, this distribution is arbitrary. 
\section{Proofs}\label{app:Proofs}
\begin{thm}[Probability that bound bounds] 
	If \begin{align}
	(1-\alpha) \leq \prob \left(\left[\lb \leq \ret \leq \ub \right] = 1 \sep   \his_{k+L},\nu\right) 
	\end{align} 
	and 
	\begin{align}
	(1-\alpha)\leq \prob \left(\left[\lb' \leq \ret' \leq \ub'\right]  = 1 \sep   \his'_{k+L}, \nu \right),
	\end{align}	
	so
	\begin{align}
	&\prob \left([\bar{\mathcal{L}} \geq \mathcal{L}]=1 \sep  b_k, \pi, \pi', \nu \right)  \geq (1-\alpha)^2.
	\end{align}
\end{thm}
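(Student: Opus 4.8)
The plan is to reduce the probabilistic claim to a deterministic pointwise inequality and then invoke the conditional-independence decomposition of Section~\ref{subsec:DecomposeRewards}.

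First I would introduce the event $A \bydef \{\lb \leq \ret \leq \ub\} \cap \{\lb' \leq \ret' \leq \ub'\}$ and show that on $A$ one has $\bar{\mathcal{L}} \geq \mathcal{L}$ deterministically. This is a case analysis over $\mathrm{sign}(\retsimpl - \retsimpl')$, matching \eqref{eq:plossExpr} against \eqref{eq:pblossExpr}: when $\retsimpl - \retsimpl' > 0$ we have $\mathcal{L} = \max\{\ret' - \ret, 0\}$ and $\bar{\mathcal{L}} = \max\{\ub' - \lb, 0\}$, and on $A$ the inequalities $\ret \geq \lb$ and $\ret' \leq \ub'$ give $\ret' - \ret \leq \ub' - \lb$, so monotonicity of $t \mapsto \max\{t,0\}$ yields $\mathcal{L} \leq \bar{\mathcal{L}}$; the case $\retsimpl - \retsimpl' < 0$ is symmetric, and when $\retsimpl = \retsimpl'$ both sides are $0$. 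Hence $A \subseteq \{\bar{\mathcal{L}} \geq \mathcal{L}\}$, so $\prob([\bar{\mathcal{L}} \geq \mathcal{L}] = 1 \sep \cdot) \geq \prob(A \sep \cdot)$ under any conditioning.

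Next I would bound $\prob(A \sep b_k, \pi, \pi', \nu)$ from below. Conditioning on the histories $(\his_{k+L}, \his'_{k+L}, \nu)$ and using the fact established in Section~\ref{subsec:DecomposeRewards} that $(\ret, \retsimpl, \lb, \ub)$ and $(\ret', \retsimpl', \lb', \ub')$ are independent given $(\his_{k+L}, \his'_{k+L}, \nu)$, the event $A$ factorizes, so
\begin{align*}
	\prob(A \sep \his_{k+L}, \his'_{k+L}, \nu) = \prob(\lb \leq \ret \leq \ub \sep \his_{k+L}, \nu)\,\prob(\lb' \leq \ret' \leq \ub' \sep \his'_{k+L}, \nu) \geq (1-\alpha)^2,
\end{align*}
where the last step is exactly hypotheses \eqref{eq:LenientConstraint1} and \eqref{eq:LenientConstraint2}. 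Since this lower bound is uniform over all realizations of $(\his_{k+L}, \his'_{k+L})$, equivalently over all futures $z_{k+}, z'_{k+}$ drawn from $\probd(z_{k+}, z'_{k+} \sep b_k, \pi, \pi')$, taking the expectation over histories preserves it, giving $\prob(A \sep b_k, \pi, \pi', \nu) \geq (1-\alpha)^2$. Combined with the first step this yields the theorem.

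The hard part will be the pointwise step: one must check the case split on $\mathrm{sign}(\retsimpl - \retsimpl')$ so that the bracketing hypotheses on $\ret$ and $\ret'$ genuinely translate into $\mathcal{L} \leq \bar{\mathcal{L}}$ in every branch, including the truncation at $0$. Everything after that is a routine ``factorize under conditioning, then integrate'' argument; the only caveat worth flagging is that in the CLT-based instantiation of Section~\ref{sec: spec} the hypotheses \eqref{eq:LenientConstraint1}--\eqref{eq:LenientConstraint2} hold only asymptotically, which does not affect the present deduction.
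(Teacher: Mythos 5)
Your proposal is correct and follows essentially the same route as the paper's proof: pointwise domination $\bar{\mathcal{L}} \geq \mathcal{L}$ on the intersection of the two bracketing events, factorization of that event's probability via the conditional independence of the two histories, and marginalization over future observations. The only difference is that you spell out the sign-based case analysis for the pointwise step, which the paper simply asserts ``by definition.''
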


\begin{proof}
	By definition
	\begin{align}
	&\prob ([\bar{\mathcal{L}} \geq \mathcal{L}]=1  \sep \left[\lb \leq  \ret \leq \ub \right] =1 , \left[\lb' \leq  \ret' \leq \ub' \right] =1 ,  b_k, \pi, \pi', z_{k+}, z'_{k+}, \nu )= 1.
	\end{align}	
	We first apply marginalization over future observations $z_{k+}\equiv z_{k+1:k+L}$  and $z'_{k+}\equiv z'_{k+1:k+L}$, and events $\left[\lb \leq \ret \leq \ub \right]$ and $\left[\lb' \leq \ret' \leq \ub'\right]$. We then use the fact that given two histories $\his_{k+L} \bydef \{b_k, \pi, z_{k+}\}$ and  $\his'_{k+L} \bydef \{b_k, \pi', z'_{k+}\}$, the events $\left[\lb \leq \ret \leq \ub \right]$ and $\left[\lb' \leq \ret' \leq \ub'\right]$ are independent of each other. Furthermore, each such event depends exclusively on its own history by design. We have that
	\begin{align}
	&\prob \left([\bar{\mathcal{L}} \geq \mathcal{L}] =1 \sep  b_k, \pi, \pi', \nu \right) = \int_{{\substack{z_{k+} \\ z'_{k+} }}} \prob \left(\bar{\mathcal{L}} \geq \mathcal{L}  \sep b_k, \pi, \pi', z_{k+}, z'_{k+}, \nu \right) \probd \left(z_{k+}, z'_{k+} \mid b_k, \pi, \pi'\right)  \geq \\
	&\int_{{\substack{z_{k+} \\ z'_{k+} }}} \prob \left([\bar{\mathcal{L}} \geq \mathcal{L}]=1 , \left[\lb \leq \ret\leq \ub \right] =1 , \left[\lb' \leq  \ret' \leq \ub' \right]=1  \sep b_k, \pi, \pi', z_{k+}, z'_{k+}, \nu \right) \probd \left(z_{k+}, z'_{k+} | b_k, \pi, \pi' \right) =\\
	&\int_{{\substack{z_{k+} \\ z'_{k+} }}} \prob \left([\bar{\mathcal{L}} \geq \mathcal{L}]=1 \sep \left[\lb \leq \ret \leq \ub \right] =1 , \left[\lb' \leq  \ret' \leq \ub' \right]=1  , b_k, \pi, \pi', z_{k+}, z'_{k+}, \nu \right) \nonumber\\
	&\prob \left(\left[\lb \leq  \ret \leq \ub \right] =1 , \left[\lb' \leq \ret' \leq \ub' \right]=1  \sep b_k, \pi, \pi', z_{k+}, z'_{k+}, \nu \right) \probd \left(z_{k+}, z'_{k+} \sep b_k, \pi, \pi'\right) =\\
	&\int_{{\substack{z_{k+} \\ z'_{k+} }}}  \prob \left(\left[\lb \leq \ret\leq \ub \right] =1  \sep b_k, \pi,  z_{k+}, \nu \right) \prob \left(\left[\lb' \leq \ret' \leq \ub' \right]=1  \mid b_k, \pi', z'_{k+}, \nu \right) \probd \left(z_{k+}, z'_{k+} | b_k, \pi, \pi'\right)  \geq (1-\alpha)^2.
	\end{align}
	This completes the proof.
\end{proof}

\begin{thm} [Conditional TDF Lower bound]
	\begin{align}		
	&\prob \left(\mathcal{L} > \Delta \sep \bar{\mathcal{L}}  \geq \mathcal{L} , b_k , \pi, \pi', \nu \right) \leq  \prob \left(\bar{\mathcal{L}}> \Delta \sep \bar{\mathcal{L}} \geq \mathcal{L} , b_k, \pi, \pi', \nu \right) \quad \forall \Delta \in \mathbb{R}. 
	\end{align}
\end{thm}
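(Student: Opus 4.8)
The plan is to reduce the statement to a single event inclusion followed by monotonicity of probability. Throughout I fix the conditioning data $b_k,\pi,\pi',\nu$ and work inside the conditional probability space determined by the event $\{\bar{\mathcal{L}}\geq\mathcal{L}\}$. By Theorem~\ref{thrm:ProbBounBounds} this event has probability at least $(1-\alpha)^2>0$ for $\alpha\in[0,1)$, so the conditional probabilities appearing in the statement are well defined.

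First I would record the elementary pointwise fact: on the event $\{\bar{\mathcal{L}}\geq\mathcal{L}\}$ one has, by definition of that event, $\mathcal{L}\leq\bar{\mathcal{L}}$. Hence for every $\Delta\in\mathbb{R}$,
\[
\{\mathcal{L}>\Delta\}\cap\{\bar{\mathcal{L}}\geq\mathcal{L}\}\ \subseteq\ \{\bar{\mathcal{L}}>\Delta\}\cap\{\bar{\mathcal{L}}\geq\mathcal{L}\},
\]
since $\mathcal{L}>\Delta$ together with $\bar{\mathcal{L}}\geq\mathcal{L}$ forces $\bar{\mathcal{L}}>\Delta$.

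Next I would apply the measure $\prob(\,\cdot\sep b_k,\pi,\pi',\nu)$ to both sides of this inclusion, use its monotonicity, and divide by $\prob(\bar{\mathcal{L}}\geq\mathcal{L}\sep b_k,\pi,\pi',\nu)>0$. This gives
\[
\prob(\mathcal{L}>\Delta\sep\bar{\mathcal{L}}\geq\mathcal{L},b_k,\pi,\pi',\nu)\ \leq\ \prob(\bar{\mathcal{L}}>\Delta\sep\bar{\mathcal{L}}\geq\mathcal{L},b_k,\pi,\pi',\nu),
\]
which is exactly the claim, for all $\Delta\in\mathbb{R}$.

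There is no genuinely hard step here; the only point deserving care is the well-definedness of the conditional probabilities, which is precisely what Theorem~\ref{thrm:ProbBounBounds} secures, together with the bookkeeping of the conditioning variables. If one wishes to avoid conditioning on an event altogether, the same argument proves the ``joint'' inequality $\prob(\mathcal{L}>\Delta,\bar{\mathcal{L}}\geq\mathcal{L}\sep b_k,\pi,\pi',\nu)\leq\prob(\bar{\mathcal{L}}>\Delta,\bar{\mathcal{L}}\geq\mathcal{L}\sep b_k,\pi,\pi',\nu)$, from which the conditional form follows by the chain rule; this is the form that plugs directly into the decomposition of \ploss TDF used in the next theorem.
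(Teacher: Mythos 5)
Your proof is correct and follows essentially the same route as the paper's: both arguments reduce the claim to the pointwise implication that on the event $\{\bar{\mathcal{L}}\geq\mathcal{L}\}$ one has $\mathcal{L}>\Delta\Rightarrow\bar{\mathcal{L}}>\Delta$, hence an inclusion of events, and then apply monotonicity of the (conditional) probability measure. If anything, your version is slightly more careful than the paper's, since you explicitly note that the conditioning event has positive probability via Theorem~\ref{thrm:ProbBounBounds}.
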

\begin{proof}
	Let us recall the definition of the probability space and the random variable \cite{durrett2019probability}. 
	A probability space is a triple
	$(\Omega, \mathcal{F}, \prob)$, where $\Omega$ is a set of "outcomes", $\mathcal{F}$ is a set of "events", and
	$\prob : \mathcal{F} \to [0, 1]$ is a function that assigns probabilities to events.  $\mathcal{F}$ is a $\sigma$-field, i.e., a (nonempty) collection
	of subsets of $\Omega$ that satisfy certain conditions. 
	
	A real valued function $X$ defined on $\Omega$ is
	said to be a random variable if for every Borel set $B \subset \mathbb{R}$ we have
	\begin{align}
	X^{-1}(B) = \{\omega : X(\omega) \in B\} \in \mathcal{F}.
	\end{align}
	In our case,  $\Omega$ is the same for $\mathcal{L}$ and $\bar{\mathcal{L}}$, since we condition on the same information
	\begin{align}
	\prob \left(\mathcal{L} > \Delta \sep \bar{\mathcal{L}} \geq \mathcal{L} ,b_k, \pi, \pi', \nu \right) = \prob \left(\{\omega \subseteq \Omega: \mathcal{L}(\omega) > \Delta \} \right).
	\end{align} 
	It follows that 
	\begin{align}
	\{\omega \subseteq \Omega : \mathcal{L}(\omega) > \Delta \} \subseteq \{\omega \subseteq \Omega : \bar{\mathcal{L}}(\omega) > \Delta \}.
	\end{align}
	This completes the proof.
\end{proof}

\begin{thm} [Upper and Lower bounds]
	Denote $\beta(\Delta) \triangleq \min \left\{1, \frac{\prob \left(\bar{\mathcal{L}} > \Delta \sep b_k, \pi, \pi', \nu \right)}{(1-\alpha)^2} +2\alpha - \alpha^2 \right\}$, so
	\begin{align}
	\prob \left(\mathcal{L} > \Delta \sep b_k, \pi, \pi', \nu\right)  \leq  \beta(\Delta)
	\end{align}
	and 
	\begin{align}
	&\prob \left(\mathcal{L} \leq \Delta \sep b_k, \pi, \pi', \nu\right)  \geq 1 -  \beta(\Delta) .
	\end{align}
\end{thm}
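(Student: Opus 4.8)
The plan is to bound the \ploss tail $\prob(\mathcal{L} > \Delta \sep b_k,\pi,\pi',\nu)$ from above and then read off the CDF statement by complementation. The natural starting point is the identity already derived just before this theorem in Section~\ref{sec:CharacterOnline},
\begin{equation*}
\prob(\mathcal{L} > \Delta \sep \cdot) = \prob(\mathcal{L} > \Delta \sep \bar{\mathcal{L}} \geq \mathcal{L}, \cdot)\,\lambda + \prob(\mathcal{L} > \Delta \sep \bar{\mathcal{L}} < \mathcal{L}, \cdot)\,(1-\lambda),
\end{equation*}
where $\lambda \bydef \prob(\bar{\mathcal{L}} \geq \mathcal{L} \sep \cdot)$ and I abbreviate the conditioning on $(b_k,\pi,\pi',\nu)$ by $\cdot$. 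I would treat the two summands separately, feeding in one of the two preceding results each time: Theorem~\ref{thrm:ProbBounBounds}, which gives $\lambda \geq (1-\alpha)^2$, and the Conditional TDF Lower bound theorem, which gives $\prob(\mathcal{L} > \Delta \sep \bar{\mathcal{L}} \geq \mathcal{L}, \cdot) \leq \prob(\bar{\mathcal{L}} > \Delta \sep \bar{\mathcal{L}} \geq \mathcal{L}, \cdot)$.

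For the first summand I would first discard the factor $\lambda$ using $\lambda \leq 1$, then apply the Conditional TDF Lower bound to pass from $\mathcal{L}$ to $\bar{\mathcal{L}}$ in the event of interest, and finally rewrite the conditional probability as a ratio, $\prob(\bar{\mathcal{L}} > \Delta \sep \bar{\mathcal{L}} \geq \mathcal{L}, \cdot) = \prob(\bar{\mathcal{L}} > \Delta,\, \bar{\mathcal{L}} \geq \mathcal{L} \sep \cdot)/\lambda \leq \prob(\bar{\mathcal{L}} > \Delta \sep \cdot)/\lambda$, where the last step just drops the conjunct $\bar{\mathcal{L}} \geq \mathcal{L}$; since $\lambda \geq (1-\alpha)^2$ this is at most $\prob(\bar{\mathcal{L}} > \Delta \sep \cdot)/(1-\alpha)^2$. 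For the second summand I would bound the conditional probability crudely by $1$ and use $1-\lambda \leq 1-(1-\alpha)^2 = 2\alpha-\alpha^2$. Adding the two estimates yields $\prob(\mathcal{L} > \Delta \sep \cdot) \leq \prob(\bar{\mathcal{L}} > \Delta \sep \cdot)/(1-\alpha)^2 + 2\alpha - \alpha^2$.

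To finish, I would observe that the left-hand side is a probability and hence also $\leq 1$, so it is bounded by the minimum of $1$ and the expression above, which is exactly $\beta(\Delta)$; this is the TDF inequality. The CDF inequality is then immediate from $\prob(\mathcal{L} \leq \Delta \sep \cdot) = 1 - \prob(\mathcal{L} > \Delta \sep \cdot) \geq 1 - \beta(\Delta)$.

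I do not expect a genuine obstacle: once Theorem~\ref{thrm:ProbBounBounds} and the Conditional TDF Lower bound theorem are in hand, the argument is essentially bookkeeping with conditional probabilities. The one point that deserves a moment's care is the handling of the first summand — one could instead keep the joint event $\{\bar{\mathcal{L}} > \Delta,\ \bar{\mathcal{L}} \geq \mathcal{L}\}$ intact and obtain the sharper-looking bound $\prob(\bar{\mathcal{L}} > \Delta \sep \cdot) + 2\alpha - \alpha^2$, but to land exactly on the stated $\beta(\Delta)$ (with its division by $(1-\alpha)^2$) one should bound the conditional tail of $\bar{\mathcal{L}}$ first and then divide out $\lambda$. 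Both routes are valid, and I would take the latter to match the statement verbatim.
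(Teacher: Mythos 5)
Your proposal is correct and follows essentially the same route as the paper's proof: the same decomposition of the TDF over the event $\bar{\mathcal{L}} \geq \mathcal{L}$, the same use of Theorem~\ref{thrm:ProbBounBounds} and the conditional TDF bound, and the same bookkeeping (the paper's auxiliary constant $c$ with $\prob(\bar{\mathcal{L}}>\Delta\sep\cdot)=c(1-\alpha)^2$ is exactly your ratio $\prob(\bar{\mathcal{L}}>\Delta\sep\cdot)/\lambda \leq c$ in disguise). Your closing remark is also right that retaining the joint event would give the tighter bound $\prob(\bar{\mathcal{L}}>\Delta\sep\cdot)+2\alpha-\alpha^2$, but the paper, like you, divides out $\lambda$ to arrive at the stated $\beta(\Delta)$.
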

\begin{proof}
	To shorten  notations let us denote  $\sep b_k, \pi, \pi', \nu$  by $\sep \cdot$ in the proof. Let us express \ploss TDF as  
	\begin{align}
	\prob \left(\mathcal{L} > \Delta  \sep \cdot\right)  = \prob \left(\mathcal{L} > \Delta  \sep \bar{\mathcal{L} }\geq \mathcal{L}, \cdot \right) \prob \left(\bar{\mathcal{L}} \geq \mathcal{L} \sep \cdot \right) + \prob \left(\mathcal{L} > \Delta \sep \bar{\mathcal{L}} < \mathcal{L}, \cdot \right) \prob \left(\bar{\mathcal{L}} < \mathcal{L}  \sep \cdot\right).
	\end{align}
	Similarly, \pbloss TDF reads
	\begin{align}
	\prob \left(\bar{\mathcal{L}} > \Delta \sep \cdot \right) = \prob \left(\bar{\mathcal{L}}> \Delta  \sep \bar{\mathcal{L} } \geq \mathcal{L}, \cdot \right) \prob \left(\bar{\mathcal{L}} \geq \mathcal{L} \sep \cdot \right) + \prob \left(\bar{\mathcal{L}}> \Delta  \sep \bar{\mathcal{L}} < \mathcal{L}, \cdot \right) \prob \left(\bar{\mathcal{L}} < \mathcal{L}\sep \cdot \right).
	\end{align}
	Since $\alpha \in [0, 1)$ it exists $c \in \mathbb{R}$ such that  
	\begin{align}
	\prob \left(\bar{\mathcal{L}} > \Delta \sep \cdot \right) = c (1-\alpha)^2.
	\end{align}	 
	This implies  
	\begin{align}
	&\prob \left(\bar{\mathcal{L}}> \Delta \sep \bar{\mathcal{L}} \geq \mathcal{L}, \cdot \right) \prob \left(\bar{\mathcal{L}} \geq \mathcal{L} \sep \cdot \right) \leq c(1-\alpha)^2, \\
	&\prob \left(\bar{\mathcal{L}}> \Delta \sep \bar{\mathcal{L}} \geq \mathcal{L}, \cdot \right) \leq c \underbrace{\frac{(1-\alpha)^2}{\prob \left(\bar{\mathcal{L}} \geq \mathcal{L} \sep \cdot \right)}}_{\leq 1}  \leq c.
	\end{align}
	Moreover, using that $\prob(\bar{\mathcal{L}} \geq \mathcal{L} \sep  \cdot ) + \prob(\bar{\mathcal{L}} < \mathcal{L} \sep  \cdot )=1$, we obtain
	\begin{align}
	&\prob \left(\mathcal{L} > \Delta \sep \cdot\right) = \prob \left(\mathcal{L} > \Delta \sep \bar{\mathcal{L}} \geq \mathcal{L}, \cdot \right) \prob \left(\bar{\mathcal{L}} \geq \mathcal{L}\sep \cdot \right) + \prob \left(\mathcal{L} > \Delta \sep \bar{\mathcal{L}} < \mathcal{L} , \cdot\right)\left(1-\prob (\bar{\mathcal{L}} \geq \mathcal{L} ) \sep \cdot \right) \leq  \nonumber\\
	&\prob \left(\mathcal{L} > \Delta \sep \bar{\mathcal{L}} \geq \mathcal{L}, \cdot \right) + 1-(1-\alpha)^2 \leq c +2\alpha - \alpha^2. 
	\end{align}
	but $c = \frac{\prob \left(\bar{\mathcal{L}} > \Delta \sep b_k, \pi, \pi', \nu \right)}{(1-\alpha)^2}$. 
	We showed that 
	\begin{align}
	&\prob \left(\mathcal{L} > \Delta \sep \cdot \right) \leq \frac{\prob \left(\bar{\mathcal{L}} > \Delta \sep b_k, \pi, \pi', \nu \right)}{(1-\alpha)^2} +2\alpha - \alpha^2. 
	\end{align}
	Furthermore, by definition of TDF
	\begin{align}
	\prob \left(\mathcal{L} > \Delta \sep \cdot \right) \leq 1.
	\end{align}
	We write the above two relations compactly as 
	\begin{align}
	\prob \left(\mathcal{L} > \Delta \sep \cdot \right) \leq \beta(\Delta),
	\end{align}
	where $\beta(\Delta) = \min \left\{1, \frac{\prob \left(\bar{\mathcal{L}} > \Delta \sep b_k, \pi, \pi', \nu \right)}{(1-\alpha)^2} +2\alpha - \alpha^2 \right\}$. Clearly
	\begin{align}
	&\prob \left(\mathcal{L} \leq \Delta \sep b_k, \pi, \pi', \nu\right)  = 1- \prob \left(\mathcal{L} > \Delta \sep b_k, \pi, \pi', \nu\right)  \geq  1 -   \beta(\Delta).
	\end{align}
	This completes the proof.
\end{proof}

\section{Calculation of \ploss Offline} \label{app:Calc_ploss_offline}
Similar to \pbloss, one approach to obtain \ploss \emph{offline} is to sample $(\ret,  \ret', \retsimpl, \retsimpl')\sim \probd (\ret, \ret', \retsimpl, \retsimpl'  \sep  b_k, \pi, \pi', \nu)$.  \ploss is represented by $\{ f_{\mathcal{L}}(\ret, \ret', \retsimpl, \retsimpl')\}$. 

To generate samples  $(\ret,  \ret', \retsimpl, \retsimpl')$ we marginalize over future measurements $z_{k+}\equiv z_{k+1:k+L}$  and $z'_{k+}\equiv z'_{k+1:k+L}$.  As we mentioned in the main manuscript 
\begin{align}
\int_{{\substack{z_{k+} \\ z'_{k+} }}}  \probd (\ret, \ret', \retsimpl, \retsimpl'  \sep  b_k, \pi, \pi', \nu, z_{k+}, z'_{k+} ) \probd (z_{k+}, z'_{k+} \sep b_k, \pi, \pi') dz_{k+} dz'_{k+} =
\nonumber \\
\int_{{\substack{z_{k+} \\ z'_{k+} }}}  \probd (\ret, \retsimpl \sep  \his_{k+L}, \nu) \probd (\ret', \retsimpl'  \sep  \his'_{k+L}, \nu) \cdot \probd (z_{k+}, z'_{k+} \sep b_k, \pi, \pi') dz_{k+} dz'_{k+}.
\end{align}
We take samples of $ \probd (z_{k+}, z'_{k+} | b_k, \pi, \pi')$  from the corresponding extended belief trees built for \pbloss. To sample 
\begin{align}
\probd (\ret, \retsimpl \sep  \his_{k+L}, \nu) \quad 	, \quad \probd (\ret', \retsimpl' \sep  \his'_{k+L}, \nu), 
\end{align} 
we use the original (not simplified) rewards calculated from the beliefs present at the belief tree (belief tree does not undergo simplification) and their simplified counterparts. 

\section{Derivation of the Distribution of a Linear Function of Gaussian Random Vector} \label{app:BoundsDer}
We are interested in the following distribution 
\begin{align}
A \cdot \begin{pmatrix} \ret \\  \retsimpl \end{pmatrix},
\end{align} 
where $A= \begin{pmatrix} 1 & -1  \end{pmatrix}$. Denote by $\phi_x(t)=\exp \left(it^T\mu-\frac{1}{2}t^T \Sigma t\right)$ the characteristic function of  $x=\begin{pmatrix} \ret \\  \retsimpl \end{pmatrix}$ and by $\phi_y(t)$ the desired characteristic function (of $y=Ax$), we have that
\begin{align}
\phi_y(t) =& \mathbb{E}\left[\exp\left(it^TAx\right)\right]=\\
&\mathbb{E}\left[\exp\left(i(A^Tt)^Tx\right)\right]=\\
&\phi_x(A^Tt)=\exp \left(i(A^Tt)^T\mu-\frac{1}{2}(A^Tt)^T \Sigma (A^Tt)\right)=\\
&\exp \left(it^TA\mu-\frac{1}{2}t^T A\Sigma A^T t\right).
\end{align}
In other words, $y=\ret - \retsimpl$ is zero mean  Gaussian with the following variance 
\begin{align}
\text{var}(y) = \text{se}^2(N) + \text{se}^2(n) -2\text{cov}.
\end{align}

\section{Derivation and Complexity of Two Dependent Extended Belief Policy Trees} \label{app:TreesCompl}
In a theoretical form, 
\begin{align}
&\probd \left(z_{k+1:k+L}, z'_{k+1:k+L} \sep b_k, \pi, \pi' \right)  = \probd \left(z_{k+1}, z'_{k+1} | b_k, \pi_k, \pi'_k \right)\nonumber \\
& \prod_{\ell=k+2}^{k+L-1} \probd \left(z_{\ell}  \sep b_{\ell-1}, \pi_{\ell-1} \right) \probd \left( z'_{\ell} \sep b'_{\ell-1}, \pi'_{\ell-1} \right) \int_{b_{\ell}} \probd \left( b_{\ell} \sep b_{\ell-1}, \pi_{\ell-1}, z_{\ell} \right)    \int_{b'_{\ell}} \probd \left( b'_{\ell} \sep b'_{\ell-1}, \pi'_{\ell-1}, z'_{\ell} \right)  \nonumber \\
& \probd \left(z_{k+L} \sep b_{k+L-1}, \pi_{k+L-1}\right) \probd \left(z'_{k+L} \sep b'_{k+L-1}, \pi'_{k+L-1}\right). \label{eq:MutialLikelihood}
\end{align}	
However, realizations of the future are correlated through the belief from planning time (present),
\begin{align}
&\probd \left( z_{k+1}, z'_{k+1} \sep b_k, \pi_k, \pi'_k \right) = \int_{x_{k+1}, x'_{k+1} } \observ{x_{k+1}}{z_{k+1}} \observ{x'_{k+1}}{z'_{k+1}} \int_{x_k} \probd \left(x_{k+1}, x'_{k+1}| x_k, b_k, \pi_k, \pi_k' \right)b_k(x_k)= \nonumber\\
&\int_{x_{k+1}, x'_{k+1} } \observ{x_{k+1}}{z_{k+1}} \observ{x'_{k+1}}{z'_{k+1}} \int_{x_k} \mot{x_k}{\pi_k(b_k)}{x_{k+1}} \mot{x_k}{\pi'_k(b_k)}{x'_{k+1}} b_k(x_k). \label{eq: observations_creation}
\end{align} 
In practice, the mutual likelihood of the observations $ \probd (z_{k+}, z'_{k+} | b_k, \pi, \pi')$ \eqref{eq:MutialLikelihood}  corresponds to two extended belief policy trees, starting from the same root ($b_k$) and having the same rule for choosing rollouts.

Below we discuss the construction of the extended belief tree. 
Let $N$ be a number of samples of the posterior belief. 
From now let us assume that  $\psi_{st}$ is an off-the-shelf particle filter with low-variance re-sampling \cite{thrun2005probabilistic}. The entire belief update process complexity is $\mathcal{O}(N)$. We choose the samples of the belief for creating the observations according to the  following scheme, which is inspired by  progressive widening \cite{sunberg2017online}. Let $n^{(\ell)}_z$ be number of observations generated by each belief at level $\ell$ of the tree. We specify $n_z^{(1)}$ (the number of observations generated by $b_k$) and the dwindle factor $c$. Starting from $\ell=2$ the number of observations generated by each belief on  level $\ell$ in the tree is calculated as $n^{(\ell)}_z=\max\{1, \lfloor \frac{n^{(1)}_z}{(\ell-1)\cdot c} \rfloor \}$.
We sample observations from resampled posterior with Fisher-Yates shuffling (with early termination) \cite{knuth2014art}. This algorithm is $\mathcal{O}(N)$ for initialization, plus $\mathcal{O}(n^{(\ell)}_z)$ for random shuffling.

In our extended belief policy tree, there may be many beliefs stemming from an observation. Denote this number by $n_b$.  
The complexity of constructing the tree is  
\begin{align}
\mathcal{O}(N) \sum_{\ell=1}^{L-1} \prod_{i=1}^{\ell} n_b n^{(i)}_z. \label{eq:TreeComplexity}
\end{align}
At each level of the tree beside the bottom,  we must apply a particle filter number of times equal to the total number of the beliefs at the next level, which is $\prod_{i=1}^{\ell} n_b n^{(i)}_z$ at level $\ell$. Also, we need to subsample observations at the current level. Since the number of beliefs at the next level is not smaller than at the current level, and 
the subsampler and particle filter complexity is linear in $N$, we are left with \eqref{eq:TreeComplexity}. 
Let us mention that sampling from the belief and application of particle filters on each level can be done in parallel.

\section{Differential Entropy}
We adopt the differential entropy estimator from \cite{skoglar2009information, boers2010particle}.  Suppose  $b_k$ is given by a weighted set of samples
\begin{align}
b_k = \sum_{i=1}^n w^i_k \delta(x_k - x_k^i).
\end{align}
Propagated belief is 
\begin{align}
b^{-}_{k+1} = \sum_{i=1}^n w^i_k \mot{x_k^i}{a_k}{x_{k+1}} \approx \sum_{i=1}^n w^i_k \delta(x_{k+1}-x^i_{k+1 | k}). \label{eq:ParticlesPropagated}
\end{align} 
The posterior is 
\begin{align}
b_{k+1}   \approx \sum_{i=1}^n w^i_{k+1} \delta(x_{k+1}-x^i_{k+1 | k}).
\end{align} 
We plug the particle approximation of the propagated belief \eqref{eq:ParticlesPropagated} into the likelihood of the observation in subsequent time instant and obtain
\begin{align}
\probd \left(z_{k+1} \mid b_k , a_k \right)= \int_{x_{k+1}} \observ{x_{k+1}}{z_{k+1}}b^{-}_{k+1} (x_{k+1}) \approx \sum_{i=1}^n w^i_k \observ{x^i_{k+1 | k}}{z_{k+1}}.
\end{align}
We assumed here that weights are normalized. Using Bayes rule,  we have
\begin{align}
\mathcal{H}(b_{k+1}) = -& \int_{x_{k+1}} b_{k+1}(x_{k+1}) \ln \observ{x_{k+1}}{z_{k+1}} \nonumber \\
- & \int_{x_{k+1}} b_{k+1}(x_{k+1}) \ln b^{-}_{k+1} (x_{k+1}) + \ln \probd \left(z_{k+1} \mid b_k , a_k \right),
\end{align}
and, therefore,
\begin{align}
\mathcal{H}(b_{k+1}) = -& \sum_{j=1}^n w^j_{k+1} \ln \observ{x^j_{k+1 | k}}{z_{k+1}} \nonumber \\
- & \sum_{j=1}^n w^j_{k+1}\ln\sum_{i=1}^n w^i_k \mot{x_k^i}{a_k}{x^j_{k+1 | k}} + \ln \sum_{i=1}^n w^i_k \observ{x^i_{k+1 | k}}{z_{k+1}}. \label{eq: differential_entropy}
\end{align}

\section{Additional Results and Discussions} \label{app:addRes}
This section shows additional results for two of the scenarios we presented in the main manuscript. In all our simulations we set $w=0.1$, $m=50$, $\alpha = 0.01$, $z_{\alpha/2}=2.56$, and the total number of observations is $500$.     
\paragraph{Characterizing Probabilistic Action Consistency}
In this scenario
\begin{align}
\Sigma_v(x)= v(x) \cdot I, \quad	v(x) = w \cdot \min \{1, \|x - x^*\|^2_2\},
\end{align}
where $x^*$ is the location of the light beacon closest to $x$. 
\begin{figure}[t] 
	\centering
	{\includegraphics[width=\textwidth]{./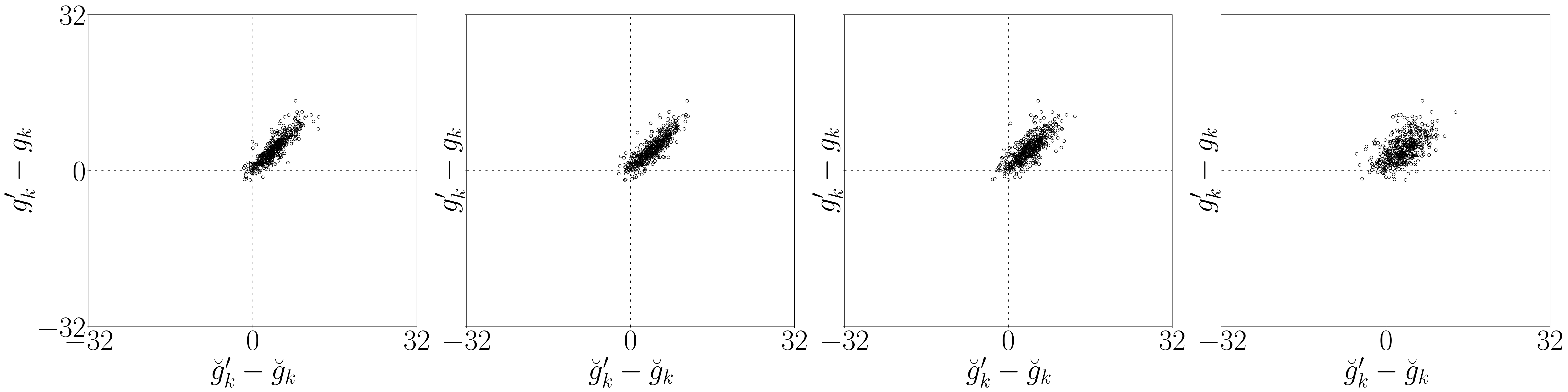}}
	\caption{Results for scenario $1$ - probabilistic action consistency: We demonstrate from the left to the right action consistency of the samples of the returns for  $n=175, n=125, n=75, n=25$, whereas $N=1500$. As we see, samples violating action consistency are present at all graphs.} \label{fig:DiffsSimScen}
\end{figure}

\begin{table}[t]
	\centering
	\begin{tabular}{|c|c|c|c|c|c|c|c|}
		\hline 
		& $n=175$ & $n=150$ & $n=125$ & $n=100$ & $n=75$ & $n=50$ & $n=25$ \\
		\hline 
		$ g_k $ time [sec] & 104957 & 69658 & 95651 & 69713 & 68584 & 96354 & 66513 \\
		\hline 
		$ \breve{g}_k $ and $ l, u  $ time [sec] & 72694 & 34842 & 33759 & 15498 & 8293 & 5589 & 969 \\
		\hline 
		$ \breve{g}_k $ time [sec] & 1454 & 661 & 669 & 298 & 172 & 119 & 14 \\
		\hline 
		$ l, u $ time [sec] & 71240 & 34181 & 33090 & 15200 & 8121 & 5469 & 955 \\
		\hline 
	\end{tabular}
	
	\caption{Results for scenario $1$ - probabilistic action consistency: run times for $N=1500$.}
	\label{tbl:run_times}
\end{table}

\begin{figure}[t]
	\centering
	{\includegraphics[width=\textwidth]{./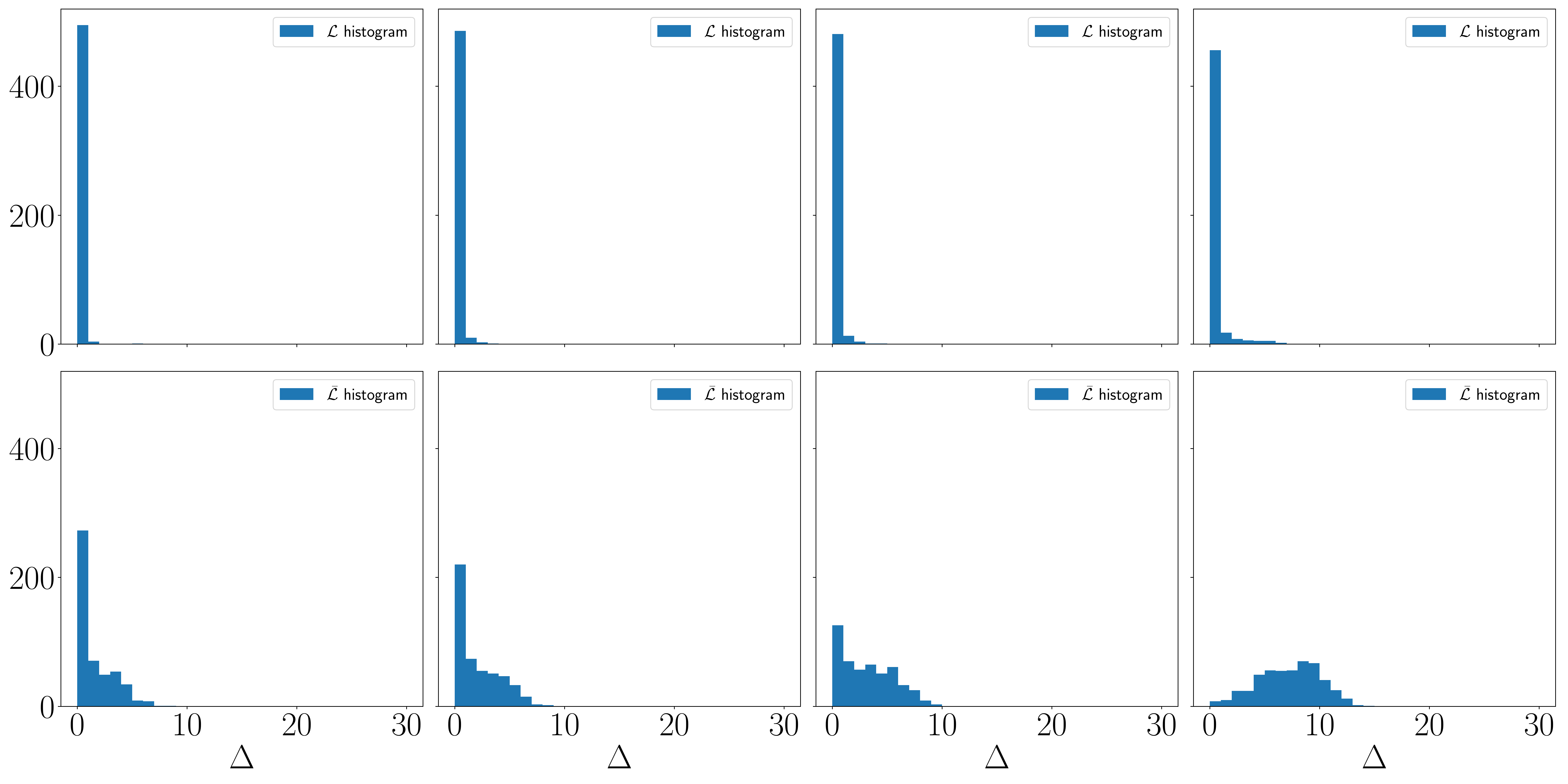}}
	\caption{Results for scenario $1$ - probabilistic action consistency: Histograms of \ploss and \pbloss for  $N=1500$, $\alpha = 0.01$, $z_{\alpha/2}=2.56$, bin width is $1.0$;  from the left to the right $n=175$, $n=125$, $n=75$, $n=25$.  } 
	\label{fig:HistsScenario1}
\end{figure}

\begin{figure}[t]
	\centering
	{\includegraphics[width=\textwidth]{./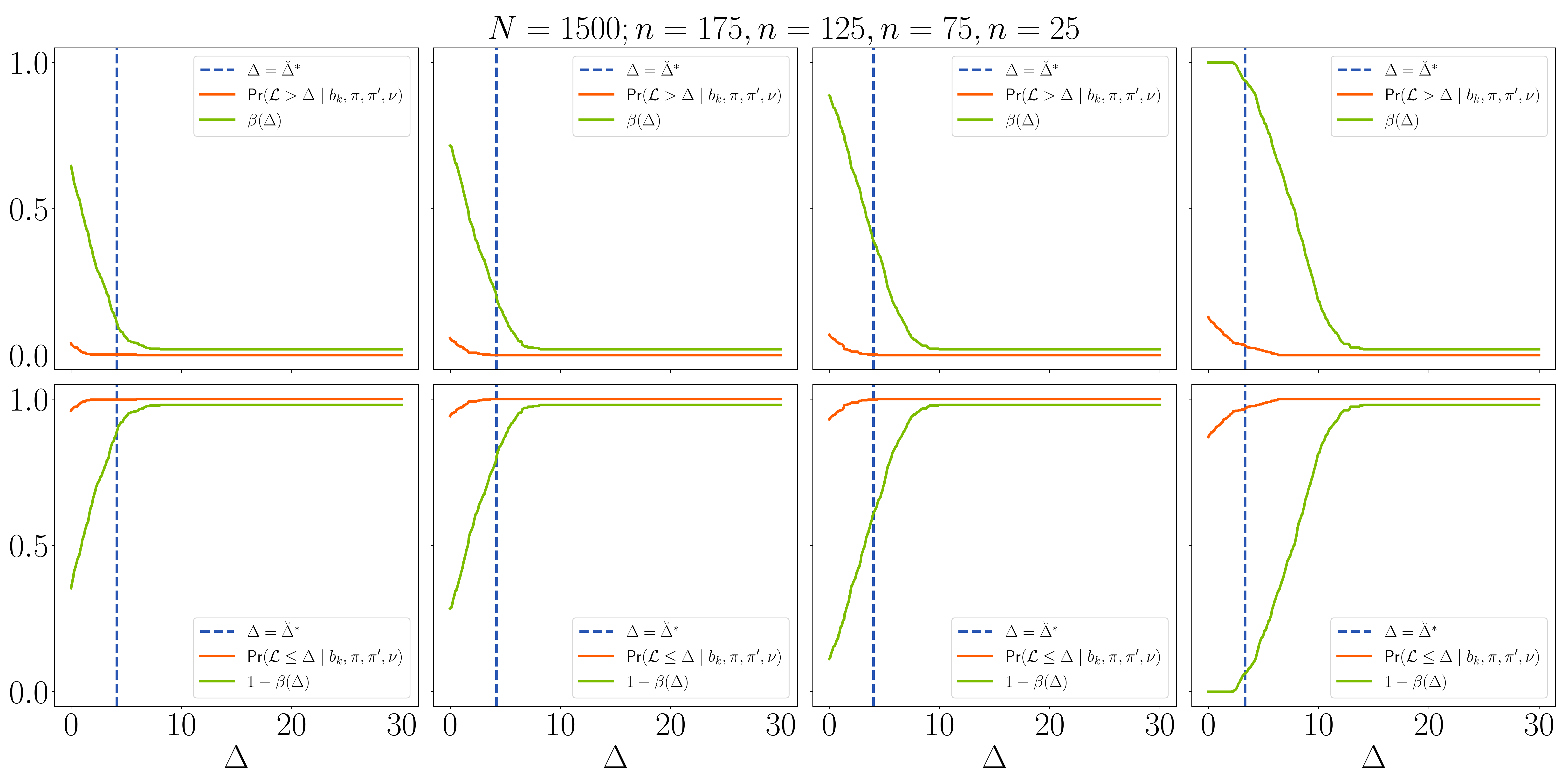}}
	\caption{Results for scenario $1$ - probabilistic action consistency: Empirical characterization for $N=1500$, $\alpha = 0.01$, $z_{\alpha/2}=2.56$, evaluated in a grid with intervals  $0.001$; from the left to the right $n=175$, $n=125$, $n=75$, $n=25$.} \label{fig:Scenario1Study}
\end{figure}

We showed an illustration of this scenario in Fig.~\ref{fig:SimScen}. In Fig.~\ref{fig:DiffsSimScen}, we demonstrated scatter plots that show samples of the
simplified and original returns' differences. We identify that with decreasing $n$, more samples are not action consistent. This phenomenon is corroborated by the histograms of $\mathcal{L}$ in Fig.~ \ref{fig:HistsScenario1}. 
We recite the model  
\begin{align}
&\begin{pmatrix} \ret \\  \retsimpl \end{pmatrix} \sep \his_{k+L}, \nu \sim \mathcal{N} 
\left(\begin{pmatrix} \mu \\ \mu \end{pmatrix}, 
\begin{pmatrix}
\text{se}^2(N) & \text{cov} \\ \text{cov} & \text{se}^2(n) 
\end{pmatrix} \right). \label{eq:Modelapp}
\end{align}
With decreasing $n$, the variance $(\text{se}^2(n))$ in \eqref{eq:Modelapp} grows, making more samples of the returns are not action consistent. Moreover, since our probabilistic bounds on the return are based on $(\text{se}(n))$, it is harder to characterize \ploss with \pbloss. This is observable in histograms of  $\bar{\mathcal{L}}$ in Fig.~\ref{fig:HistsScenario1} and empirical characterization in Fig.~\ref{fig:Scenario1Study}, where 
\begin{align}
&\text{empirical \ploss TDF:}  \quad \frac{\text{number of samples of }\mathcal{L} \text{, satisfying } \mathcal{L} > \Delta }{\text{number of all samples of } \mathcal{L}}, \\
&\text{empirical \pbloss TDF:}  \quad \frac{\text{number of samples of }\bar{\mathcal{L}} \text{, satisfying } \bar{\mathcal{L}} > \Delta }{\text{number of all samples of } \bar{\mathcal{L}}}.
\end{align}

\paragraph{Revealing Empirical Absolute Action Consistency} 
Here the covariance matrix of the observation model is   
\begin{align}
\Sigma_v(x)= v(x) \cdot I, \quad v(x) = w \cdot  \|x - x^*\|^2_2.
\end{align}
\begin{figure}[t] 
	\centering
	{\includegraphics[width=\textwidth]{./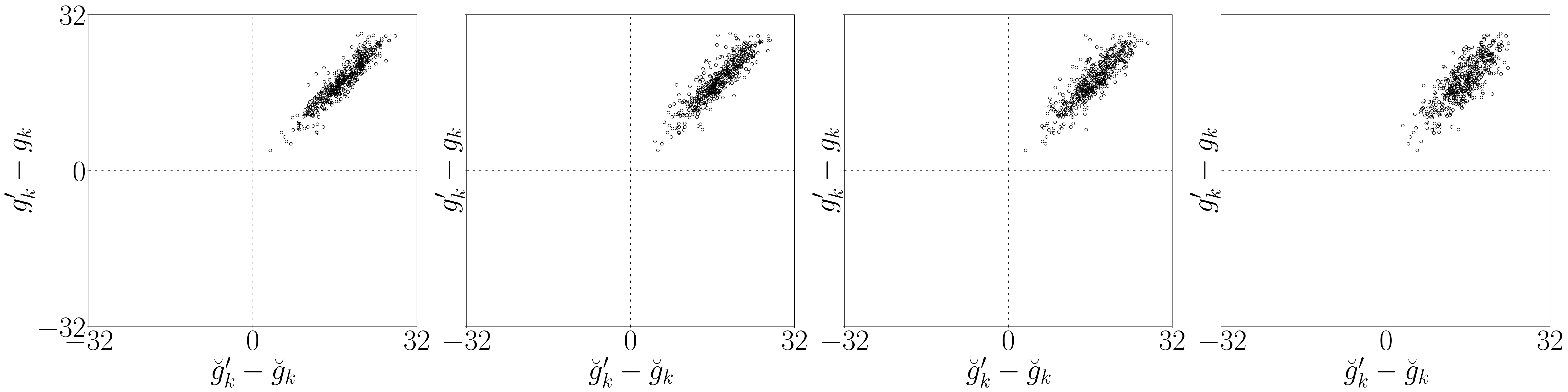}}
	\caption{Results for scenario $2$ - empirical absolute action consistency: We demonstrate from the left to the right action consistency of the samples of the returns for  $n=100, n=75, n=50, n=25$, whereas $N=1000$. As we see, all the samples are action consistent.} \label{fig:DiffsScenario2}
\end{figure}
\begin{figure}[t]
	\centering
	{\includegraphics[width=\textwidth]{./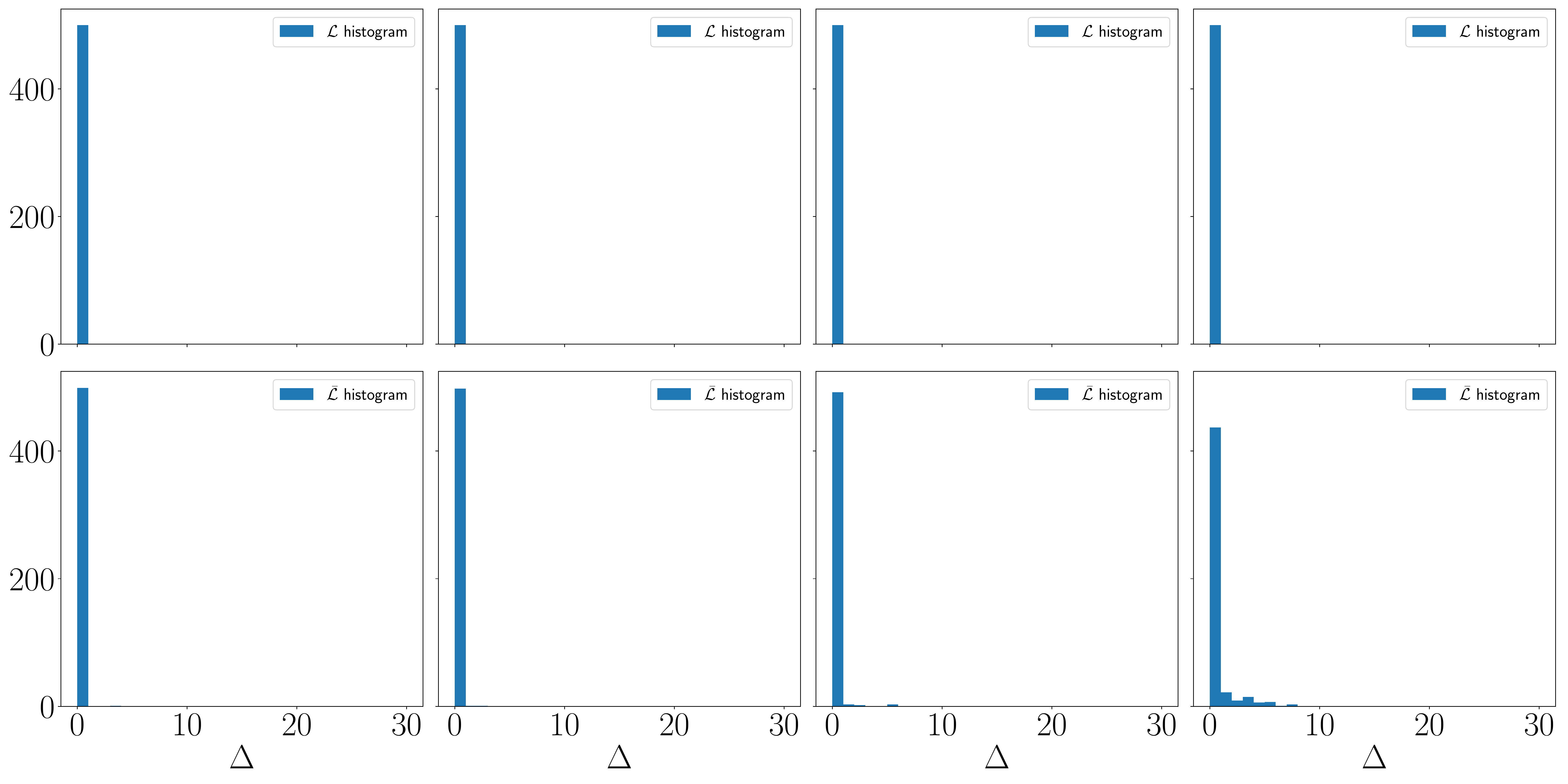}}
	\caption{Results for scenario $2$ - empirical absolute action consistency: Histograms of \ploss and \pbloss for $N=1000$, $\alpha = 0.01$, $z_{\alpha/2}=2.56$, bin width is $1.0$;  from the left to the right $n=100$, $n=75$, $n=50$, $n=25$.} 
	\label{fig:HistsScenario2}
\end{figure}
\begin{figure}[t]
	\centering
	{\includegraphics[width=\textwidth]{./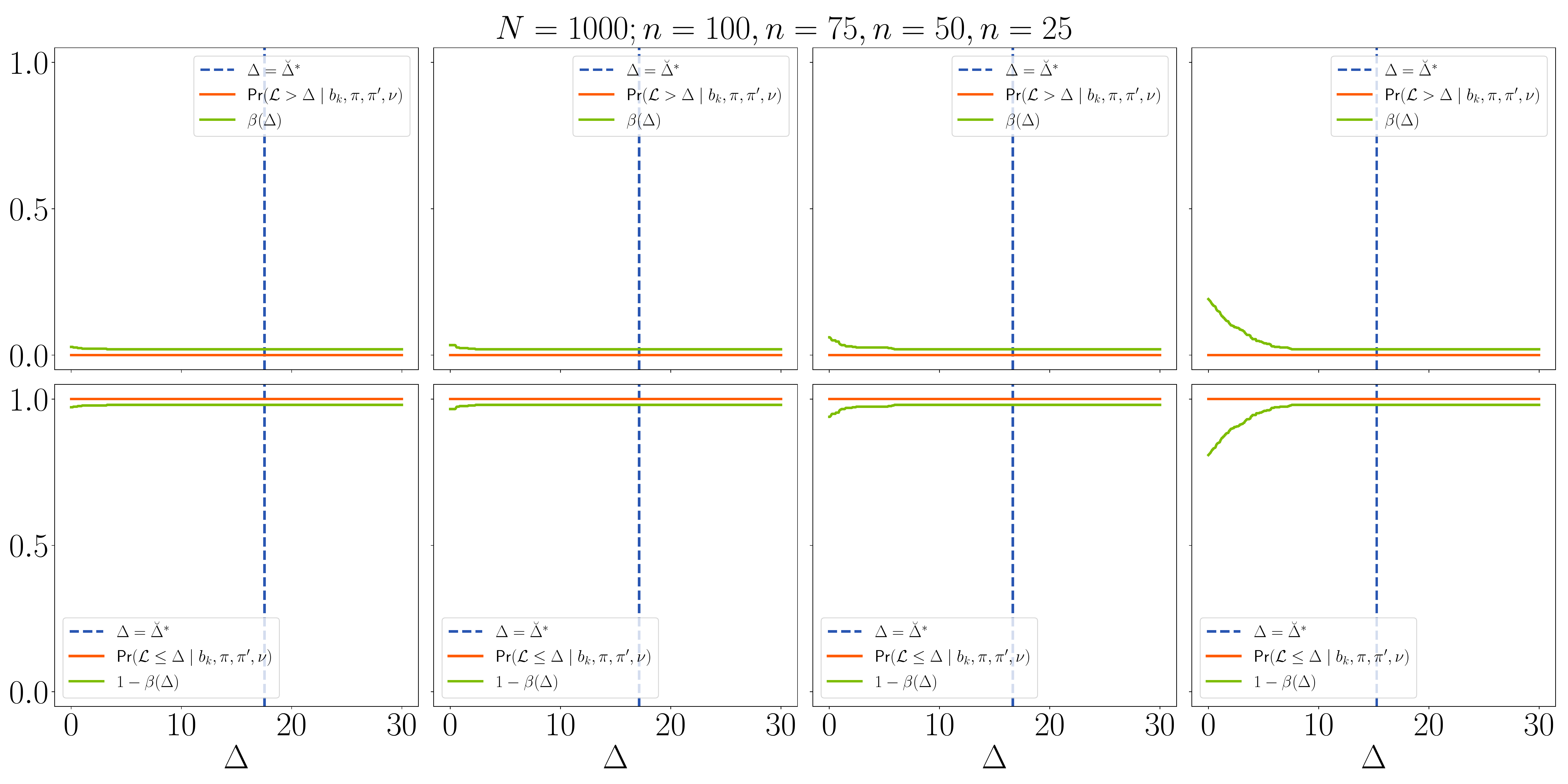}}
	\caption{Results for scenario $2$ - empirical absolute action consistency: Empirical characterization for $N=1000$, $\alpha = 0.01$, $z_{\alpha/2}=2.56$, evaluated in a grid with intervals  $0.001$; from the left to the right $n=100$, $n=75$, $n=50$, $n=25$.} \label{fig:Scenario2Study} 
\end{figure}
\begin{table}[t]
	\centering
	\begin{tabular}{|c|c|c|c|c|}
		\hline 
		& $n=100$ & $n=75$ & $n=50$ & $n=25$ \\
		\hline 
		$ g_k $ time [sec] & 36745 & 45187 & 44899 & 30889 \\
		\hline 
		$ \breve{g}_k $ and $ l, u  $ time [sec] & 17361 & 12546 & 4388 & 844 \\
		\hline 
		$ \breve{g}_k $ time [sec] & 363 & 247 & 65 & 14 \\
		\hline 
		$ l, u $ time [sec] & 16998 & 12299 & 4323 & 830 \\
		\hline 
	\end{tabular}
	\caption{Results for scenario $2$ - empirical absolute action consistency: run times for $N=1000$.}
	\label{tbl:run_times2}
\end{table}
We demonstrated this scenario in Fig.~\ref{fig:SimScen2}. As we can see in Fig.~\ref{fig:DiffsScenario2}, the clouds of samples are farther from the origin than in the previous scenario. Therefore, two action sequences are more distant. In this case, the simplification is empirically absolute action consistent, as we observe in the histograms of  $\mathcal{L}$ in Fig.~\ref{fig:HistsScenario2} and empirical characterization shown in Fig.~\ref{fig:Scenario2Study}.  We report run times for two scenarios in Table~\ref{tbl:run_times} and Table~\ref{tbl:run_times2}, respectively.

\section{Technical Characteristics of Computers Used in Simulations}
Our simulations are written in Julia language with a multi-threaded calculation of immediate reward. We used $4$ computers with the following characteristics:

\begin{enumerate}
	\item 40 cores Intel(R) Xeon(R)  E5-2670 v2 with 256 GB of RAM  working at 2.50GHz; 
	\item 24 cores Intel(R) Core(TM) i9-7920X with 64 GB of RAM  working at 2.90GHz;	
	\item 20 cores Intel(R) Xeon(R)  E5-2630 v4 with 64 GB of RAM  working at 2.20GHz;	  
	\item 20 cores Intel(R) Core(TM) i9-9820X with 64 GB of RAM working at 3.30GHz.
	
\end{enumerate}	
\end{appendices}
\end{document}